\documentclass{article}

\usepackage{microtype}
\usepackage{graphicx}
\usepackage{subcaption}
\usepackage{booktabs} 

\usepackage{hyperref}

\usepackage[accepted]{icml2026}

\usepackage{amsmath}
\usepackage{amssymb}
\usepackage{mathtools}
\usepackage{amsthm}

\usepackage[capitalize,noabbrev]{cleveref}

\theoremstyle{plain}
\newtheorem{theorem}{Theorem}[section]
\newtheorem{proposition}[theorem]{Proposition}

\theoremstyle{definition}

\newtheorem{assumption}[theorem]{Assumption}
\theoremstyle{remark}

\icmltitlerunning{Neural Control: Adjoint Learning Through Equilibrium Constraints}

\begin{document}

\twocolumn[
  \icmltitle{Neural Control: Adjoint Learning Through Equilibrium Constraints}

  \begin{icmlauthorlist}
    \icmlauthor{Dezhong Tong}{1}
    \icmlauthor{Jiawen Wang}{2}
    \icmlauthor{Hengyi Zhou}{1}
    \icmlauthor{Yinglong Shen}{2}
    \icmlauthor{Xiaonan Huang}{1}
    \icmlauthor{M. Khalid Jawed}{2}
  \end{icmlauthorlist}

  \icmlaffiliation{1}{The Robotics Department, University of Michigan, Ann Arbor, US}
  \icmlaffiliation{2}{The Department of Mechanical and Aerospace Engineering, University of California, Los Angeles, US}

  \icmlcorrespondingauthor{Xiaonan Huang}{xiaonanh@umich.edu}
  \icmlcorrespondingauthor{M. Khalid Jawed}{khalidjm@seas.ucla.edu}

  \icmlkeywords{Implicit Differentiation, Differentiable Simulation, Model Predictive Control, Deformable Object Manipulation, Deep Equilibrium Models}

  \vskip 0.3in
]
\printAffiliationsAndNotice{}  

\begin{abstract}
Many physical AI tasks require sequential implicit computation: at each step, boundary controls are applied, and the resulting configuration is obtained by solving an equilibrium problem. 
This setting arises naturally in deformable object manipulation, where even bending a deformable linear object (DLO) to a target shape can be nonlinear and multistable: identical boundary conditions may produce different configurations depending on actuation history. 
Unlike explicit transition models, the control-to-configuration relation is implicit and history-dependent, making long-horizon learning and control brittle; backpropagating through iterative solves is also memory- and compute-intensive. 
We propose Neural Control, a boundary-control framework that propagates gradients through branch-dependent sequences of equilibrium solves rather than a single fixed point. 
Neural Control computes trajectory-dependent proxy gradients by differentiating equilibrium conditions with an adjoint formulation, avoiding solver unrolling while keeping forward rollouts on converged equilibria. 
Combined with receding-horizon continuation, Neural Control re-anchors optimization to realized equilibria and mitigates basin switching.
We validate Neural Control on simulated and real DLO manipulation, compare against SPSA and iCEM, and demonstrate applicability to a learned DEQ-style implicit equilibrium model.
\end{abstract}

\section{Introduction}
Many learning and control problems in physical AI require optimizing through sequential implicit computation: instead of producing the next state with an explicit transition map, each state is obtained by solving an implicit equilibrium or fixed-point problem. 
This setting arises naturally in physical AI, robotics, graphics, and simulation, where configurations are often defined as minimizers of an energy functional under boundary conditions, loads, or design parameters. 
In such systems, control changes the implicit problem itself: by adjusting boundary conditions or parameters, the agent shapes the energy landscape so that the resulting equilibrium satisfies a desired objective. 
The resulting control-to-configuration relation is therefore solver-defined rather than explicit, and it can be high-dimensional, nonlinear, and multistable.

This sequential implicit setting poses challenges beyond differentiating
a single fixed point. 
In multistable systems, the same control input can admit multiple valid equilibria, and the realized state is selected by the previous configuration, actuation history, and basin of attraction of the warm-started solver. 
Thus, the rollout is branch-dependent and history-dependent: small changes in actuation can move the system to a different equilibrium branch. 
This challenges the single-valued transition assumption underlying many model-learning and planning methods, and makes long-horizon credit assignment brittle~\cite{hausknecht2015deep, suh2022differentiable}.

A representative example is robotic manipulation of deformable
structures (Figure~\ref{fig:teaser}A), such as deformable linear objects
(DLOs)~\cite{mora2021pods, li2023dexdeform, tong2024sim2real}.
Even seemingly simple tasks, such as bending an elastic strip to a target shape, involve large geometrically nonlinear deformation, hysteresis, and multistable equilibria~\cite{qiao2020scalable, tong2025discrete, huang2025tutorial, bretl2014quasi, tong2021automated}. 
As illustrated in Figure~\ref{fig:teaser}B, identical boundary conditions can correspond to multiple stable configurations. 
DLO manipulation therefore provides a challenging testbed for sequential implicit optimization: each rollout step requires solving an equilibrium problem over a high-dimensional continuum discretization~\cite{yin2021modeling, moll2006path, bergou2008discrete}, and the realized branch depends on the actuation path.

Differentiable simulation provides gradients for physical optimization
problems~\cite{qiao2020scalable, mora2021pods, suh2022differentiable}, but long-horizon implicit rollouts introduce two additional difficulties.
First, multistable control requires branch-consistent,
trajectory-dependent sensitivities that follow the realized branch
selected by the solver~\cite{georgiev2024adaptive, gao2024adaptive}.
If the branch structure is ignored, learned dynamics may blur or average
over multiple outcomes; latent-state or multimodal models can partially
mitigate this issue~\cite{bishop1994mixture, hafner2019learning,
hafner2019dream}.
Second, naively backpropagating through every Newton or conjugate-gradient iteration is memory-intensive and can be numerically fragile near instabilities~\cite{gruslys2016memory, amos2017optnet, bai2019deep}.

Existing implicit-layer methods typically differentiate a single
solver-defined object, such as one optimization layer or KKT
system~\cite{amos2017optnet, agrawal2019differentiable,
amos2018differentiable}, or one fixed point in a deep equilibrium
model~\cite{bai2019deep}.
In contrast, our setting requires propagating useful gradients through a
sequence of warm-started, branch-dependent equilibrium solves.
The central challenge is therefore not only to differentiate an implicit state, but to obtain trajectory-level sensitivities that remain aligned with the realized equilibrium branch.

We propose Neural Control\footnote{\label{fn:projectpage}Code, videos, and additional results are available at
\url{https://structurescomp.github.io/neural-control/}.}, a boundary-control framework for
long-horizon optimization through sequential implicit equilibria.
The forward rollout is solver-consistent: each state is obtained by solving the equilibrium equations to convergence along the realized branch.
Inspired by adjoint sensitivity methods in continuous-depth learning~\cite{chen2018neural}, we differentiate the equilibrium conditions via an adjoint formulation to compute trajectory-dependent proxy gradients, avoiding unrolling of the inner Newton/CG iterations.
This provides a memory-efficient gradient pathway for optimizing control
signals through sequences of implicit solves.

To improve robustness in multistable regimes, we combine these gradients with receding-horizon continuation (RHC), related in spirit to
receding-horizon model predictive control~\cite{chua2018deep}.
In this setting, RHC serves as a continuation and branch-tracking mechanism: by repeatedly re-anchoring optimization to realized equilibria, it localizes proxy-gradient error, reduces basin drift, and mitigates unintended switches between equilibrium branches.

We validate Neural Control on shape and trajectory control of DLOs using
both simulation and real-robot experiments under noise, model mismatch,
and gravity. 
We compare against black-box gradient-free baselines, including SPSA~\cite{spall2002multivariate} and iCEM~\cite{pinneri2021sample}, with CEM~\cite{botev2013cross} reported in the appendix as an additional classical baseline, and show that Neural Control achieves substantially lower loss with significantly less wall-clock time.
To further demonstrate that the framework can operate beyond analytical mechanics models, we also evaluate it on a learned DEQ-style implicit equilibrium model trained from experimental force--strain data.

This paper makes the following contributions:
(i) a sequential implicit optimization formulation for branch-dependent
equilibrium rollouts, clarifying the distinction from single-step
implicit layers and fixed-point models;
(ii) a proxy-adjoint gradient method that computes trajectory-dependent
sensitivities through converged equilibrium solves without unrolling
inner Newton/CG iterations;
(iii) a receding-horizon continuation scheme that re-anchors optimization
to realized equilibria and improves branch consistency in multistable
regimes;
(iv) empirical validation on simulated and real DLO manipulation,
comparison with modern gradient-free baselines, including iCEM, and an
additional learned DEQ-style implicit-model experiment.

\begin{figure}
    \centering
    \includegraphics[width=\linewidth]{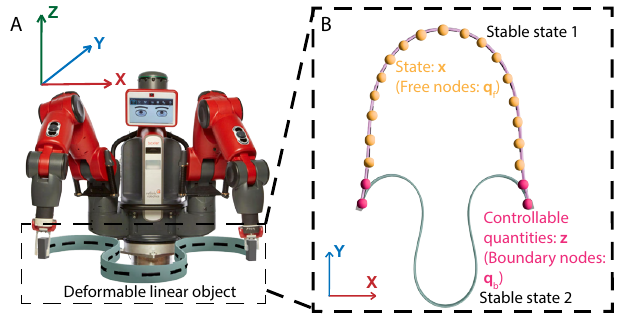}
    \caption{(A) Robotic manipulation of a deformable linear object (e.g., elastic strips). (B) Discretization into boundary and free nodes. Varying the controllable quantities $\mathbf z$ shapes the resulting state $\mathbf x$. Multiple stable shapes may exist under the same $\mathbf z$.}
    \label{fig:teaser}
\end{figure}

\section{Related Work}
\paragraph{Optimization and Learning for Long-Horizon, Path-Dependent Control.}
Long-horizon control and sequential decision-making are central themes in
machine learning, where performance is often limited by delayed credit
assignment, partial observability, and, in model-based settings,
compounding model error.
A common response is to treat history dependence as partial observability and augment policies or value functions with memory, e.g., recurrent reinforcement learning (RL)~\cite{hausknecht2015deep}, or to learn latent state-space models that support planning through imagined rollouts~\cite{hafner2019learning,hafner2019dream}.
Another dominant paradigm is model-based RL with receding-horizon planning:
actions are optimized over a short horizon and repeatedly replanned using
model predictive control (MPC), improving robustness to model error and
regime changes~\cite{chua2018deep}.

Black-box and sampling-based optimizers are also widely used for
long-horizon problems because they avoid differentiating through complex
transitions and can explore multimodal outcomes.
Representative methods include simultaneous perturbation stochastic approximation (SPSA)~\cite{spall2002multivariate}, the cross-entropy method
(CEM)~\cite{botev2013cross}, sample-efficient variants such as iCEM~\cite{pinneri2021sample}, and parameter-space search methods such as
evolution strategies (ES)~\cite{salimans2017evolution} and augmented
random search (ARS)~\cite{mania2018simple}.
Related work also makes CEM differentiable (DCEM)~\cite{amos2020differentiable}. 
However, these methods typically require many full rollouts per update, and their cost grows quickly with horizon length, action dimension, and population size.
This cost becomes especially acute in our setting, where each rollout step itself requires an expensive implicit equilibrium solve. 
Our method retains the stabilizing effect of receding-horizon replanning, but computes trajectory-level sensitivities through the realized implicit
rollout, enabling efficient gradient-based optimization.

\paragraph{Memory-Efficient Gradients Through Implicit Computation.}
Backpropagation through time (BPTT) provides a direct way to obtain
trajectory-dependent gradients by propagating sensitivities across
timesteps~\cite{williams1990efficient}, but storing intermediate states
makes memory scale with horizon length, motivating checkpointing and
recomputation strategies~\cite{gruslys2016memory}. 
In differentiable physics and control, this issue is amplified when each transition is computed by an iterative implicit solver, such as Newton or conjugate gradient, since differentiating through all solver iterations is
memory-intensive and can be numerically brittle near instabilities~\cite{amos2017optnet,suh2022differentiable}.

A complementary line of work avoids solver unrolling by differentiating
through optimality, fixed-point, or KKT conditions.
Examples include differentiable optimization layers such as OptNet~\cite{amos2017optnet} and differentiable convex optimization layers~\cite{agrawal2019differentiable}, deep equilibrium models (DEQs)~\cite{bai2019deep}, differentiable MPC via KKT conditions~\cite{amos2018differentiable}, and Pontryagin differentiable
programming (PDP)~\cite{jin2020pontryagin}. 
Inspired by adjoint sensitivity methods for continuous-depth models~\cite{chen2018neural}, these methods show that gradients can often be obtained without unrolling the full computational procedure.

Our setting differs from these methods in that the object being
differentiated is not a single optimization layer, fixed point, or KKT
system. 
Instead, we require gradients through a sequence of warm-started equilibrium solves, where multistability makes the realized
trajectory branch-dependent. 
Compared to BPTT through solver iterations, Neural Control avoids unrolling inner solves; compared to standard implicit layers, it propagates proxy-adjoint sensitivities through long-horizon, branch-dependent implicit rollouts.

\paragraph{Differentiable Models for Physical Learning and Control.}
Differentiable modeling and simulation have become core primitives for
gradient-based learning, planning, and inverse problems. 
ML toolchains such as DiffTaichi~\cite{hu2019difftaichi} and Brax~\cite{freeman2021brax} make optimization through physics practical at scale. 
Differentiable simulators have also been extended from rigid bodies to deformable and soft systems, including ChainQueen~\cite{hu2019chainqueen}, DiffPD~\cite{du2021diffpd}, and differentiable soft multi-body simulation~\cite{qiao2021differentiable}. 
For flexible structures, discrete differential geometry (DDG) discretizations provide geometry-preserving, energy-based models with analytic derivatives~\cite{bergou2008discrete,grinspun2003discrete,huang2025tutorial, tong2025discrete}.
Benchmarks such as SoftGym~\cite{lin2021softgym} and PlasticineLab~\cite{huang2021plasticinelab} further highlight the difficulty of long-horizon deformable manipulation.

Learned differentiable simulators and graph-based dynamics models can
provide fast predictors for physical systems~\cite{sanchez2020gns,
pfaff2021meshgraphnets}, while analytic gradients from differentiable multiphysics simulations have also been integrated into RL updates, e.g.,
SAPO~\cite{xing2025sapo}. 
However, near multistability and bifurcation boundaries, deterministic predictors can blur branch selection unless multimodality or latent modes are modeled explicitly~\cite{bishop1994mixture}.
Our work is complementary to these modeling efforts: rather than
proposing a new simulator, we focus on the gradient pathway for
optimizing through sequential implicit models. This perspective applies
both to analytical mechanics models and to learned implicit equilibrium
models, as demonstrated by our DEQ-style validation.

\section{Methodology}
\label{sec:method}
We study learning and control problems where the system state is defined implicitly by an equilibrium condition rather than an explicit dynamics model. A continuation parameter $\lambda\in[0,1]$ indexes the process, with $\lambda=0$ the initial configuration and $\lambda=1$ the terminal configuration. At each $\lambda$, the equilibrium state $\mathbf x(\lambda)$ is obtained by solving
\begin{equation}
\mathcal G(\mathbf x(\lambda),\mathbf z(\lambda))=\mathbf 0,
\label{eq:equilibrium}
\end{equation}
where $\mathbf z(\lambda)$ denotes controllable quantities entering the
equilibrium problem, such as boundary conditions, loads, or design
parameters. Thus, the state is not produced by an explicit transition
map of the form $\mathbf x_{k+1}=\mathcal F(\mathbf x_k,\mathbf z_k)$,
but by a solver-selected equilibrium. 
In mechanics, Eq.~\eqref{eq:equilibrium} represents quasi-static force balance: inertia and vibrational effects are neglected, so internal conservative forces balance applied loads. A common instance is
\begin{equation}
\mathcal G(\mathbf x,\mathbf z)=\nabla_{\mathbf x}E(\mathbf x,\mathbf z)-\mathbf F^{\mathrm{ext}},
\label{eq:equilibrium_example}
\end{equation}
where $E$ is total potential energy and $\mathbf F^{\mathrm{ext}}$ collects external loads.

Equilibrium-governed systems pose two challenges: (i) $\mathbf x(\lambda)$ is typically computed by iterative solvers (e.g., Newton--CG), so differentiating through solver iterations across long horizons is expensive; and (ii) equilibria can be multistable, so for fixed $\mathbf z$, multiple stable solutions may exist and warm-started solvers follow a realized branch through their basins of attraction. 
We therefore generate rollouts by warm-started equilibrium solves along the continuation path. 
In all experiments, each forward state is solved to the prescribed equilibrium tolerance, so the backward approximation does not arise from truncating or unrolling the forward solver.

A neural controller parameterized by $\Theta$ produces a continuation control signal
\begin{equation}
\mathbf u(\lambda)=\mathbf u_\Theta(\lambda),
\label{eq:u_theta}
\end{equation}
which drives $\mathbf z(\lambda)$ via known continuation dynamics
\begin{equation}
\frac{d\mathbf z}{d\lambda}=f(\mathbf z(\lambda),\mathbf u(\lambda)),
\qquad \mathbf z(0)=\mathbf z_0.
\label{eq:z_dyn}
\end{equation}
Given $\mathbf z(\lambda)$, the realized equilibrium $\mathbf x(\lambda)$ is obtained by repeatedly solving Eq.~\eqref{eq:equilibrium} with warm starts from the previous solution, yielding a branch-consistent rollout.

\subsection{Adjoint Gradients via Proxy Sensitivity Dynamics}
\label{sec:proxy_adjoints}
At any converged equilibrium $(\mathbf x(\lambda),\mathbf z(\lambda))$
satisfying $\mathcal G(\mathbf x,\mathbf z)=\mathbf 0$, and assuming the
realized branch is locally smooth with nonsingular $\mathbf G_x$, implicit differentiation gives the local equilibrium sensitivity
\begin{equation}
\mathbf S(\lambda):=\frac{\partial \mathbf x}{\partial \mathbf z}(\lambda),
\qquad
\mathbf G_x(\lambda)\,\mathbf S(\lambda)=-\mathbf G_z(\lambda),
\label{eq:S_cont}
\end{equation}
where $\mathbf G_x=\partial\mathcal G/\partial\mathbf x$ and
$\mathbf G_z=\partial\mathcal G/\partial\mathbf z$.
Near bifurcations or ill-conditioned equilibria, $\mathbf G_x$ may become singular or poorly conditioned, and the local sensitivity can become unreliable.
We never form $\mathbf S(\lambda)$ explicitly. To compute a product
$\mathbf S(\lambda)^\top\mathbf v$, we solve the adjoint linear system
\begin{equation}
\mathbf G_x(\lambda)^\top \boldsymbol\eta = \mathbf v
\end{equation}
and return
\begin{equation}
\mathbf S(\lambda)^\top\mathbf v
=
-\mathbf G_z(\lambda)^\top\boldsymbol\eta .
\end{equation}
This requires one adjoint linear solve and avoids constructing the dense
sensitivity matrix.

Along a smooth realized branch, the total derivative satisfies
\begin{equation}
\frac{d\mathbf x}{d\lambda}
=
\mathbf S(\lambda)\frac{d\mathbf z}{d\lambda}
+
\frac{\partial \mathbf x}{\partial \lambda}.
\label{eq:dxdl_general}
\end{equation}
In our instantiation, $\mathcal G$ has no explicit $\lambda$-dependence: any scheduling is absorbed into $\mathbf z(\lambda)$.
Consequently, $\partial \mathbf x/\partial \lambda=\mathbf 0$ and
\begin{equation}
\frac{d\mathbf x}{d\lambda}
=
\mathbf S(\lambda)\frac{d\mathbf z}{d\lambda}.
\label{eq:dxdl_exact}
\end{equation}
Combining Eq.~\eqref{eq:dxdl_exact} with the continuation dynamics Eq.~\eqref{eq:z_dyn} gives the continuous sensitivity identity
\begin{equation}
\frac{d\mathbf x}{d\lambda}
=
\mathbf S(\lambda)\,f(\mathbf z(\lambda),\mathbf u(\lambda)).
\label{eq:dxdl_proxy}
\end{equation}

Eq.~\eqref{eq:dxdl_proxy} defines the proxy sensitivity dynamics
used for backward sensitivity propagation. The forward states
$\mathbf x(\lambda)$ are still obtained by solving
Eq.~\eqref{eq:equilibrium} to convergence at each continuation step.
Thus, the forward rollout remains solver-consistent, while the backward
pass uses local implicit sensitivities evaluated along the realized
branch.

The main analytic approximation in our gradient construction is a frozen-tangent discretization. We evaluate $\mathbf S(\lambda)$ on the realized rollout at discrete continuation steps and treat it as locally constant during the backward pass, thereby ignoring derivatives of $\mathbf S$ with respect to $\mathbf x$, $\mathbf z$, and $\lambda$ along the segment. 
This avoids higher-order derivatives of Eq.~\eqref{eq:equilibrium}. 
The approximation is local: it assumes that the realized equilibrium branch is smooth and that $\mathbf G_x$ is nonsingular and varies moderately over each continuation step. 
Near bifurcations, ill-conditioned equilibria, or high-curvature branches, the local tangent can change rapidly and the proxy gradient may become less accurate. 
Appendix~\ref{app:frozen_tangent} formalizes this approximation and shows that, under these local regularity assumptions, the resulting proxy gradient is first-order accurate.

\paragraph{Terminal Objective.}
We first consider a terminal loss
\begin{equation}
\mathcal L=\phi(\mathbf x(1),\mathbf z(1)).
\label{eq:terminal_loss_cont}
\end{equation}
Define proxy-system costates $\mathbf a(\lambda):=\partial \mathcal L/\partial\mathbf x(\lambda)$ and
$\mathbf g(\lambda):=\partial \mathcal L/\partial\mathbf z(\lambda)$.
Under Eq.~\eqref{eq:dxdl_proxy} and frozen-tangent approximation, the terminal conditions are
\begin{equation}
\begin{aligned}
\mathbf a(1)
&=
\nabla_{\mathbf x}\phi(\mathbf x(1),\mathbf z(1)),\\
\mathbf g(1)
&=
\nabla_{\mathbf z}\phi(\mathbf x(1),\mathbf z(1)).
\end{aligned}
\label{eq:terminal_bc_cont}
\end{equation}

Let $\mathbf A(\lambda):=\partial f/\partial\mathbf z(\mathbf z(\lambda),\mathbf u(\lambda))$ and $\mathbf B(\lambda):=\partial f/\partial\mathbf u(\mathbf z(\lambda),\mathbf u(\lambda))$.
Under frozen tangent, we neglect derivatives of $\mathbf S(\lambda)$ along the branch. The costates satisfy the backward ODEs, integrated from $\lambda=1$ to $0$,
\begin{equation}
\begin{aligned}
-\frac{d\mathbf a}{d\lambda} &= \mathbf 0,\\
-\frac{d\mathbf g}{d\lambda}
& =
\mathbf A(\lambda)^\top\!\Big(\mathbf g(\lambda)+\mathbf S(\lambda)^\top \mathbf a(\lambda)\Big),
\end{aligned}
\label{eq:terminal_adjoint_odes}
\end{equation}
and the instantaneous control gradient is
\begin{equation}
\frac{d\mathcal L}{d\mathbf u(\lambda)}
=
\mathbf B(\lambda)^\top\!\Big(\mathbf g(\lambda)+\mathbf S(\lambda)^\top \mathbf a(\lambda)\Big).
\label{eq:dL_du_cont}
\end{equation}
The parameter gradient follows by the chain rule
\begin{equation}
\frac{d\mathcal L}{d\Theta}
=
\int_{0}^{1}
\left(\frac{\partial \mathbf u_\Theta(\lambda)}{\partial \Theta}\right)^\top
\frac{d\mathcal L}{d\mathbf u(\lambda)}\,d\lambda .
\label{eq:dL_dTheta_cont}
\end{equation}

\subsection{Adjoints for Trajectory Objectives}
\label{sec:adjoint_trajectory}
Many equilibrium-control tasks specify objectives along the realized branch rather than only at $\lambda=1$.
We therefore consider
\begin{equation}
\mathcal L
=
\phi(\mathbf x(1),\mathbf z(1))
+
\int_{0}^{1} w(\lambda)\,\ell(\mathbf x(\lambda),\mathbf z(\lambda))\,d\lambda,
\label{eq:traj_loss_cont}
\end{equation}
where $w(\lambda)\ge 0$ is user-specified. The terminal conditions remain
Eq.~\eqref{eq:terminal_bc_cont}. Under Eq.~\eqref{eq:dxdl_proxy} and
frozen tangent, the backward ODEs become
\begin{equation}
\begin{aligned}
-\frac{d\mathbf a}{d\lambda}
= &
w(\lambda)
\nabla_{\mathbf x}
\ell(\mathbf x(\lambda),\mathbf z(\lambda)), \\
-\frac{d\mathbf g}{d\lambda}
=
&
w(\lambda)
\nabla_{\mathbf z}
\ell(\mathbf x(\lambda),\mathbf z(\lambda))
\\
&+
\mathbf A(\lambda)^\top
\Big(
\mathbf g(\lambda)
+
\mathbf S(\lambda)^\top\mathbf a(\lambda)
\Big).
\end{aligned}
\label{eq:traj_a_g_ode}
\end{equation}
The control and parameter gradients retain the same form as
Eqs.~\eqref{eq:dL_du_cont}--\eqref{eq:dL_dTheta_cont}.

\subsection{Receding-Horizon Continuation Control}
\label{sec:receding_horizon_control}
Over long continuations, proxy-gradient and discretization errors can
accumulate.
In multistable regions, these errors may push the warm-started equilibrium solve across basins of attraction, causing the rollout to switch to an unintended equilibrium branch. 
We therefore solve the task as a sequence of short continuation segments, in the spirit of receding-horizon optimization. 
In our setting, receding-horizon continuation (RHC) serves as a continuation and branch-tracking mechanism: it repeatedly re-anchors optimization to the realized equilibrium branch.

At segment index $m$, starting from $(\mathbf x^{(m)}_0,\mathbf z^{(m)}_0)$, we optimize a control schedule over a local homotopy $\lambda\in[0,1]$, implemented with $H$ continuation steps of size $1/H$. 
The continuation control is generated by a lightweight neural controller
\begin{equation}
\mathbf u(\lambda)=\mathbf u_{\Theta^{(m)}}(\lambda),
\label{eq:rhc_policy}
\end{equation}
with fixed architecture and parameters $\Theta^{(m)}$ reinitialized at the start of each segment and optimized within the segment.
We minimize a short-horizon segment objective $\mathcal L^{(m)}$ of the same form as Eq.~\eqref{eq:traj_loss_cont}, restricted to the segment rollout:
\begin{equation}
\begin{aligned}
\min_{\Theta^{(m)}}\; \mathcal L^{(m)} = &\phi(\mathbf x^{(m)}(1),\mathbf z^{(m)}(1)) \\
+ & \int_{0}^{1}w(\lambda)\,
\ell(\mathbf x^{(m)}(\lambda),\mathbf z^{(m)}(\lambda))\,d\lambda.
\end{aligned}
\label{eq:rhc_theta_obj}
\end{equation}
subject to the equilibrium constraint~\eqref{eq:equilibrium}, continuation dynamics~\eqref{eq:z_dyn}, and bounded progress (e.g., $\|\mathbf u(\lambda)\|\le u_{\max}$). 

We optimize $\Theta^{(m)}$ using the proxy-adjoint gradients above and stop early when the segment objective stalls, or after a fixed budget.
Executing the optimized segment yields
$(\mathbf x^{(m)}(1),\mathbf z^{(m)}(1))$, and we shift the horizon by
\begin{equation}
\mathbf x^{(m+1)}_0 \leftarrow \mathbf x^{(m)}(1),
\qquad
\mathbf z^{(m+1)}_0 \leftarrow \mathbf z^{(m)}(1).
\label{eq:rhc_shift_cont}
\end{equation}
This segment-wise design localizes proxy-gradient error, reduces basin
drift, and improves branch consistency in multistable landscapes.
We provide the discrete derivation, implementation details, and pseudocode
in Appendix~\ref{app:discrete_proxy_adjoint} and
Algorithm~\ref{alg:adjoint_rhc}.

\section{Results}
\label{sec:results}
We evaluate Neural Control on elastic-strip manipulation tasks
(Figure~\ref{fig:teaser}A).
The strip undergoes large deformations and admits multistable equilibria, inducing strong path dependence: small continuation errors can accumulate and trigger undesired switches between equilibrium branches. 
We consider three tasks that progressively stress these effects: (i) point-to-point manipulation with a terminal objective, 
(ii) path following of a designated strip point with a trajectory objective, and 
(iii) shape formation in multistable regimes requiring
branch selection.

All methods share the same forward equilibrium solver, which computes strip configurations by solving implicit equilibria to convergence at each continuation step.
We compare our full method (Adjoint+RHC) against gradient-free baselines, including SPSA and iCEM, and ablate our approach by removing RHC (Adjoint-only). 
CEM results are provided in Appendix~\ref{app:cem_results} as an additional classical gradient-free baseline.

Each method is run under a fixed update budget. 
We report both best achieved loss and total wall-clock time, including all equilibrium solves and additional tangent/adjoint linear-solve overheads. 
Finally, we validate the approach in real-world robotic experiments; an additional learned DEQ-style implicit-model experiment is provided in Appendix~\ref{app:deq_validation}.

\subsection{Experimental Setup}
\label{sec:exp_setup}

We model the manipulated object as a planar elastic strip with stretching and bending energy.
At each continuation step, the forward rollout computes the realized configuration by solving the corresponding
equilibrium equations to convergence under imposed boundary constraints.
We partition degrees of freedom into boundary nodes $\mathbf q_b$ and free nodes $\mathbf q_f$ and set
$\mathbf z\equiv\mathbf q_b$, $\mathbf x\equiv\mathbf q_f$ (Figure~\ref{fig:teaser}B): control acts only through boundary conditions, while free nodes are solved implicitly and used to evaluate task losses.
The strip is held by clamped boundary conditions at both ends; Tasks~1--2 actuate transverse (global x-axis in our setup) displacements at both ends, and Task~3 actuates $(x,y)$ displacement and in-plane rotation at one end, with the other end passively clamped.
Unless otherwise stated, we discretize the strip with $N + 1=101$ nodes and ignore gravity in numerical experiments.
The stretching and bending stiffness are fixed to
$k_s = 0.314~\mathrm{N}$ and $k_b = 7.85\times 10^{-8}~\mathrm{N\,m^2}$.
Full discretization and solver details are provided in the Appendix~\ref{app:strip_solver}.

\subsection{Tasks and Metrics}
\label{subsec:tasks_metrics}

We evaluate three numerical tasks that progressively test terminal objectives, trajectory objectives, and multistable shape/branch selection. 
In all tasks, boundary degrees of freedom define the controllable variable $\mathbf z\equiv\mathbf q_b$, and the free degrees of freedom $\mathbf x\equiv\mathbf q_f$ are solved implicitly from equilibrium and
used to evaluate losses.
Tasks~1--2 start from the same pre-deformed configuration
(Figure~\ref{fig:task1_2}A), while Task~3 starts from a straight strip (Figure~\ref{fig:task3}).

The continuation control is produced by a neural controller
$\mathbf u(\lambda)=\mathbf u_\Theta(\lambda)$, where
$\mathbf u(\lambda)$ specifies the rate of boundary actuation along continuation.
In Tasks~1--2, $\mathbf u(\lambda)\in\mathbb R^2$ outputs the transverse ($x$-direction) displacement rates of the left and right clamped ends.
In Task~3, $\mathbf u(\lambda)\in\mathbb R^3$ outputs the $(x,y)$ displacement rates and in-plane rotation rate of the manipulated end, while the other end remains passively clamped.

\textbf{Task 1: Point-to-Point Node Targeting.}
We select a strip node $i$ and a target position $\mathbf p_i^\star$ within the reachable workspace.
The goal is to drive the node to the target at the end of continuation. 
We use the terminal loss
\begin{equation}
\mathcal L_{\text{pos}}=\frac{1}{2}\big\|\mathbf p_i(1)-\mathbf p_i^\star\big\|_2^2,
\label{eq:task1_loss}
\end{equation}
and declare success if $\mathcal L_{\text{pos}}\le \varepsilon_{\text{pos}}$.

\textbf{Task 2: Trajectory Tracking.}
We track a prescribed planar path with a designated node, chosen as the middle node.
Let $\mathbf p^\star_{\mathrm{mid}}(\lambda)$ denote the reference trajectory, e.g., sinusoidal, circular, square-wave, or triangular paths (Figure~\ref{fig:task1_2}B).
We define
\begin{equation}
\ell_{\text{traj}}(\lambda)=\frac{1}{2}\big\|\mathbf p_{\mathrm{mid}}(\lambda)-\mathbf p_{\mathrm{mid}}^\star(\lambda)\big\|_2^2,
\label{eq:task2_running}
\end{equation}
and evaluate the trajectory objective
\begin{equation}
\mathcal L_{\text{traj}}=\int_0^1 \ell_{\text{traj}}(\lambda)\,d\lambda,
\label{eq:task2_obj}
\end{equation}
implemented as a Riemann sum over realized continuation steps and, for RHC, over the concatenated executed trajectory. 
We declare success if $\mathcal L_{\text{traj}}\le\varepsilon_{\text{traj}}$.

\textbf{Task 3: Shape Formation Under Multistability.}
We shape the strip by manipulating one clamped end while the other end remains passively clamped.
For fixed boundary conditions, multiple stable equilibria may coexist; given a target shape, the goal is to reach the corresponding equilibrium branch. 
We measure shape discrepancy by curvature matching:
\begin{equation}
\mathcal L_{\kappa}=\int_{0}^{L}\big(\kappa(s;1)-\kappa^\star(s)\big)^2\,ds,
\label{eq:task3_loss}
\end{equation}
where $\kappa^\star(s)$ is the target curvature. We declare success if $\mathcal L_{\kappa}\le \varepsilon_{\kappa}$.

\subsection{Main Numerical Results}
\label{subsec:main_results}

\begin{figure*}
    \centering
    \includegraphics[width=\linewidth]{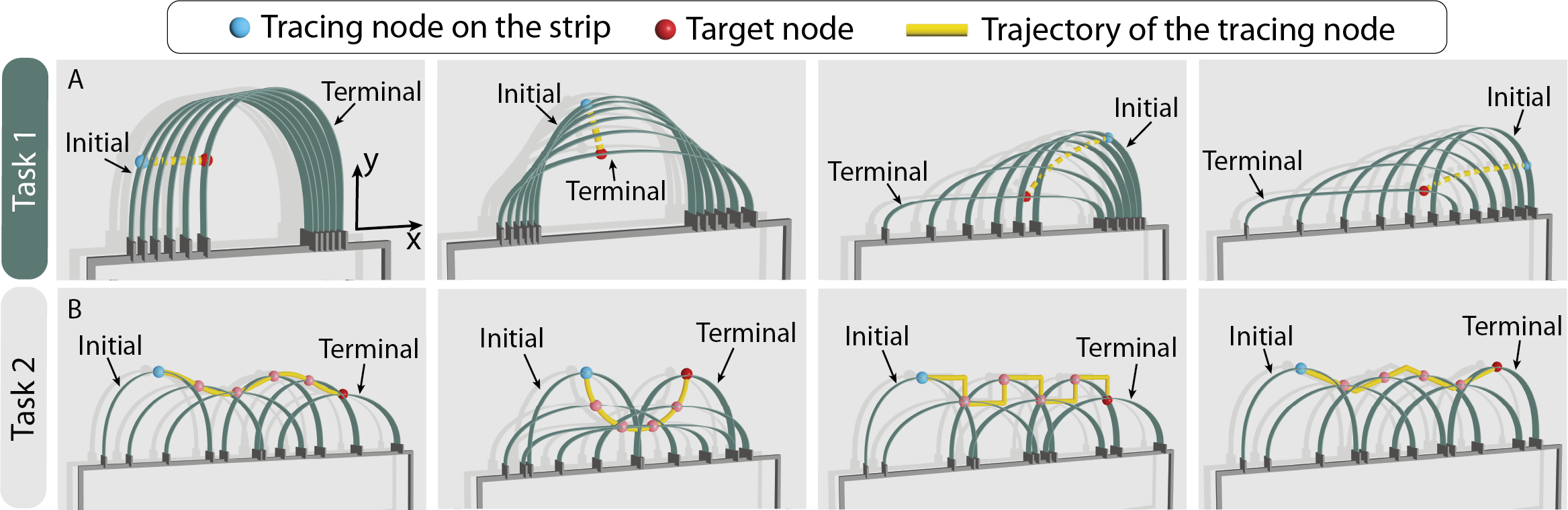}
    \caption{Numerical results for Tasks~1--2. (A) Four representative cases of two-end boundary actuation that drive a selected strip node to a user-specified target position. (B) Four representative cases of two-end boundary actuation that guide the strip midpoint along a prescribed planar trajectory. (Shadow is rendered for visualization only.)}
    \label{fig:task1_2}
\end{figure*}

\begin{figure}[b!]
    \centering
    \includegraphics[width=\linewidth]{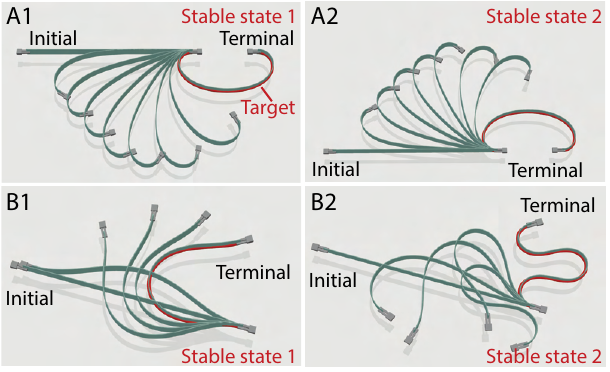}
    \caption{Neural continuation control for multistable shape formation (Task~3).
    (A1--A2) Two mirrored C-shaped stable equilibria arise under the same boundary condition (identical end pose), illustrating multistability.
    (B1--B2) A U-shaped and an M-like stable equilibrium are also obtained under the same boundary conditions.}
    \label{fig:task3}
\end{figure}

We report the main numerical results of Neural Control on the three tasks in Section~\ref{subsec:tasks_metrics}.
Unless otherwise noted, all experiments use the same strip discretization and physical parameters as in Section~\ref{sec:exp_setup}, with solver and hyperparameter details deferred to the Appendix~\ref{app:hyperparams}.

\textbf{Tasks~1--2: Two-End Boundary Actuation.}
Figure~\ref{fig:task1_2} summarizes four representative instances of point-to-point node targeting (Task~1; Figure~\ref{fig:task1_2}A) and midpoint trajectory tracking (Task~2; Figure~\ref{fig:task1_2}B).
In Task~1, the controller steers a user-selected node to the target position by coordinating the transverse actuation of both clamped ends, producing coupled rigid motion and elastic deformation along the realized equilibrium branch. 
In Task~2, the controller guides the strip midpoint to follow prescribed nontrivial planar trajectories; the tracked paths closely match the references across all cases, confirming that the method can optimize objectives that depend on intermediate equilibria rather than only the terminal state.

\textbf{Task~3: Multistable Shape Formation.}
Figure~\ref{fig:task3} visualizes shape formation under multistability when manipulating a single clamped end while the other end remains passively clamped. 
For the same boundary condition, multiple stable equilibria coexist, including mirror-symmetric C-shaped solutions (Figure~\ref{fig:task3}A1--A2) and distinct U/M-like shapes (Figure~\ref{fig:task3}B1--B2).
Despite these competing basins, curvature-based shaping drives the strip to the specified target configuration, indicating robust branch selection in regimes where small continuation errors can otherwise induce unintended basin switches.

\subsection{Baselines and Ablations}
\label{subsec:baselines_ablations}

We compare four optimizers for elastic-strip manipulation: (i) SPSA, (ii) iCEM, (iii) our neural continuation control without RHC (Adjoint-only; full-horizon), and (iv) our full method, which combines the same adjoint-based neural control with RHC (Adjoint+RHC). 
All methods share the same forward simulator (implicit equilibrium solves to convergence at each continuation step), and the same controller parameterization $\mathbf u_\Theta(\lambda)$ with $\lambda$ as the sole input.
They differ only in how the controller parameters $\Theta$ are updated and whether receding-horizon execution is used.
CEM results are reported in Appendix~\ref{app:cem_results} as an additional classical gradient-free baseline.

\begin{figure*}
    \centering
    \includegraphics[width=\linewidth]{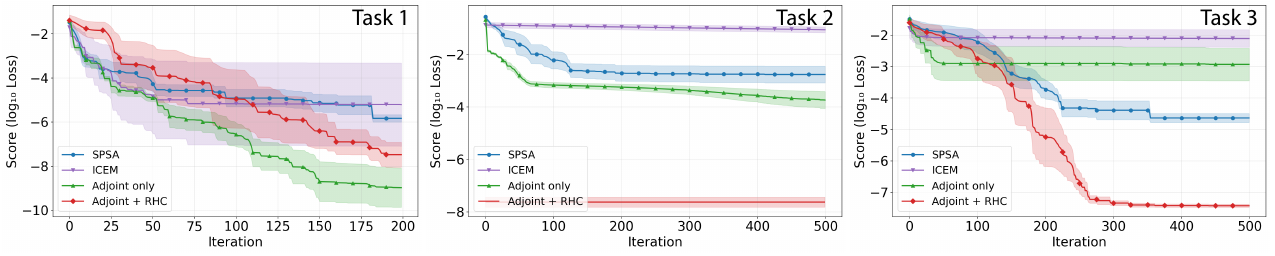}
    \caption{
    Optimization steps (iterations) vs.\ score (log best-so-far loss) for Tasks~1--3 (mean $\pm$1 stderr over four cases). For Task~2, Adjoint+RHC executes the trajectory in segments; we therefore report its full-horizon trajectory objective $\mathcal L_{\mathrm{traj}}$ only when a complete rollout is available (i.e., at the end of the executed horizon), and use those values to form the best-so-far curve.}
    \label{fig:loss}
\end{figure*}

\textbf{Budgeting and Evaluation.}
Each method is run under a fixed update budget.
An objective evaluation corresponds to one full rollout that computes the task loss from implicit equilibrium solves.
The number of objective evaluations per update is method-dependent: SPSA uses two evaluations per update, iCEM evaluates a population of sampled candidates, and Adjoint-only / Adjoint+RHC use one rollout per update. 
Adjoint-based methods additionally solve tangent/adjoint linear systems to obtain gradients via implicit sensitivities, without backpropagating through the equilibrium solver iterations. 
For RHC, an update refers to one optimizer update within the current segment, and all reported losses are computed on the concatenated executed trajectory across segments.
We report the best loss achieved within the update budget and the total wall-clock time in Table~\ref{tab:theory_empirical}; wall-clock time includes all equilibrium solves and additional linear-solve overheads.

\textbf{Baselines and Ablation.}
SPSA and iCEM are used as black-box optimizers over the controller parameters $\Theta$ and are run without RHC.
SPSA forms a stochastic descent direction using two-sided random perturbations of $\Theta$.
iCEM samples a population of candidate controller parameters, selects elite candidates, smooths the population update, and reinserts the current best candidate.
In our implementation, we use population size $P=10$, elite fraction $0.3$, smoothing coefficient $\alpha=0.25$, minimum standard deviation $10^{-2}$, elite reuse fraction $1.0$, no population decay, and best-candidate reinsertion.
Additional hyperparameters are provided in Appendix~\ref{app:hyperparams}.
Our ablation isolates the contribution of RHC by comparing Adjoint-only against Adjoint+RHC.
In Adjoint+RHC, we split the continuation into short segments of $H$ steps and solve a sequence of segment subproblems; the controller parameters $\Theta$ are reinitialized at the start of each segment.
All reported task losses are evaluated using the task definitions in Section~\ref{subsec:tasks_metrics} on the concatenated executed rollout across segments.   

\textbf{Convergence and Robustness.}
Figure~\ref{fig:loss} plots the log best-so-far objective value versus optimizer updates for all methods and tasks.
Task~1 (node targeting) is the simplest setting: all methods reduce the loss early, but the gradient-free baselines plateau under the fixed update budget.
Both adjoint-based variants continue improving and reach lower final loss, with Adjoint-only converging fastest in this short-horizon regime where proxy-adjoint gradients remain accurate.

Task~2 (trajectory tracking) is substantially harder because the objective depends on intermediate equilibria along the realized branch.
SPSA and iCEM again stall under the fixed update budget.
Adjoint-only succeeds on smooth references but struggles on non-smooth targets, where long-horizon accumulation makes optimization brittle.
In contrast, Adjoint+RHC achieves consistent convergence across all trajectory types, indicating that segment-wise re-anchoring to realized equilibria is critical for stable trajectory-level control.

Task~3 (multistable shape formation) is the most challenging regime: multiple stable equilibria coexist under identical boundary conditions, so small errors can induce undesired basin switches.
SPSA and iCEM stagnate early, and Adjoint-only, despite a fast initial decrease, often becomes trapped in an incorrect branch and fails the curvature tolerance.
Adjoint+RHC consistently reaches the lowest curvature losses and reliably selects the specified equilibrium branch, demonstrating the importance of combining adjoint-based gradients with RHC in multistable landscapes.

\textbf{Efficiency Summary.}
Table~\ref{tab:theory_empirical} reports best loss and wall-clock time (mean $\pm$ std) under the fixed update budget.
Adjoint+RHC is consistently the most efficient method, achieving substantially lower loss with lower runtime than SPSA and iCEM.
This gap is expected: gradient-free methods spend most compute on repeated full equilibrium-defined rollouts, whereas adjoint-based updates extract a search direction from one realized rollout plus additional tangent/adjoint linear solves.
Compared to Adjoint-only, RHC mitigates long-horizon error accumulation and reduces trajectory-storage memory by backpropagating only through short segments, yielding the best time--accuracy trade-off in our experiments.

\begin{table*}[t]
\centering
\scriptsize
\setlength{\tabcolsep}{4pt}
\resizebox{\textwidth}{!}{%
\begin{tabular}{lcc|cc|cc|cc}
\hline
& \multicolumn{2}{c|}{Theory (per optimizer update)}
& \multicolumn{2}{c|}{Task 1 (200 updates)}
& \multicolumn{2}{c|}{Task 2 (500 updates)}
& \multicolumn{2}{c}{Task 3 (500 updates)} \\
Method
& Time/update
& Memory/update
& Time (s)$\downarrow$
& Best loss$\downarrow$
& Time (s)$\downarrow$
& Best loss$\downarrow$
& Time (s)$\downarrow$
& Best loss$\downarrow$ \\
\hline
SPSA
& $\mathcal O(2K\,C_{\mathrm{eq}})$
& $\mathcal O(n_\Theta)$
& $727.3\pm108.8$ & $2.0{\times}10^{-6}\pm1.8{\times}10^{-6}$
& $3612.9\pm1231.9$ & $5.4{\times}10^{-3}\pm7.7{\times}10^{-3}$
& $3967.3\pm417.9$ & $2.8{\times}10^{-5}\pm1.5{\times}10^{-5}$ \\

iCEM
& $\mathcal O(PK\,C_{\mathrm{eq}})$
& $\mathcal O(P\,n_\Theta)$
& $2332.2\pm126.7$ & $1.5{\times}10^{-2}\pm1.5{\times}10^{-2}$
& $3208.0\pm13.7$ & $9.8{\times}10^{-2}\pm3.6{\times}10^{-2}$
& $9907.3\pm59.2$ & $1.2{\times}10^{-2}\pm7.3{\times}10^{-3}$ \\

Adjoint-only
& $\mathcal O(K(C_{\mathrm{eq}}+C_{\mathrm{lin}}))$
& $\mathcal O\!\left(K(n_x+n_z)+n_\Theta\right)$
& $176.0\pm53.1$ & $3.2{\times}10^{-7}\pm5.6{\times}10^{-7}$
& $1111.2\pm83.1$ & $4.3{\times}10^{-4}\pm5.4{\times}10^{-4}$
& $935.2\pm164.7$ & $8.4{\times}10^{-3}\pm1.2{\times}10^{-2}$ \\

\textbf{Adjoint+RHC}
& $\mathcal O(H(C_{\mathrm{eq}}+C_{\mathrm{lin}}))$
& $\mathcal O\!\left(H(n_x+n_z)+n_\Theta\right)$
& $\mathbf{16.1\pm2.8}$ & $\mathbf{2.3{\times}10^{-7}\pm3.0{\times}10^{-7}}$
& $\mathbf{186.9\pm24.8}$ & $\mathbf{3.6{\times}10^{-8}\pm3.6{\times}10^{-8}}$
& $\mathbf{50.9\pm6.1}$ & $\mathbf{3.8{\times}10^{-8}\pm7.4{\times}10^{-9}}$ \\
\hline
\end{tabular}%
}
\caption{\textbf{Theory and empirical efficiency comparison.}
Here $K$ is the number of continuation steps in a full-horizon rollout,
$H$ is the number of steps per RHC segment, $P$ is the iCEM population
size, and $n_x,n_z,n_\Theta$ denote the dimensions of $\mathbf x$,
$\mathbf z$, and the neural-controller parameters $\Theta$, respectively.
$C_{\mathrm{eq}}$ is the cost of computing one converged equilibrium at a
continuation step, and $C_{\mathrm{lin}}$ is the cost of one additional
linearized tangent/adjoint solve with $\mathbf G_x$ or
$\mathbf G_x^\top$ used by implicit sensitivities. Memory for SPSA/iCEM
counts optimizer-side storage beyond the working memory required by a
single equilibrium solve. For adjoint methods, memory includes storing
the realized trajectory states for the backward proxy-adjoint pass. For
Adjoint+RHC, the listed theory cost is per optimizer update within one
RHC segment; the reported wall-clock time sums all segment optimizations
over the executed trajectory. The table reports total wall-clock runtime
and best loss achieved over the fixed update budget.}
\label{tab:theory_empirical}
\end{table*}

\subsection{Real-World Experiments}
\begin{figure*}[ht]
  \vskip 0.2in
\begin{center}\centerline{\includegraphics[width=1.0\textwidth]{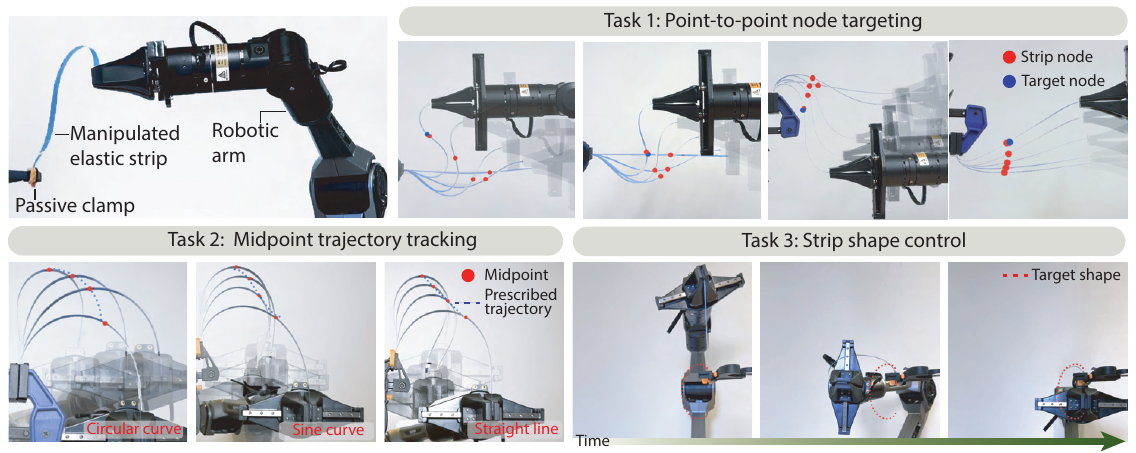}}
    \caption{Experimental setup and qualitative results for all three tasks.
    For Task~1 and Task~2, each subfigure shows sampled frames from the control process, illustrating point-to-point manipulation and midpoint trajectory following, respectively.
    For Task~3, the figure visualizes the initial, intermediate, and final configurations during the shape control process.}
    \label{fig:exp}
  \end{center}
\end{figure*}
As shown in Figure~\ref{fig:exp}, a 6-DoF robot arm actuates one endpoint of an elastic strip, while the other endpoint is fixed using a passive clamp.
The strip has measured length $\approx25$\,cm, width $0.5$\,cm, and thickness $0.127$\,mm.
Clamping one end yields an effective free length of $22$\,cm for Tasks~1--2 and $15$\,cm for Task~3, chosen to fit within the robot workspace.
At each step, the policy outputs a desired boundary increment for the actuated endpoint, which we convert into a Cartesian end-effector command.
For Tasks~1--2, the end-effector is constrained to translate in the $x$--$y$ plane; for Task~3, we additionally command an endpoint rotation.
Figure~\ref{fig:exp} shows representative hardware rollouts.
Across all three tasks, the observed strip motion matches the task requirements, indicating qualitative sim-to-real transfer under gravity and experimentally identified material parameters.
Additional intermediate frames and videos are available on the project page\textsuperscript{\ref{fn:projectpage}}.

\section{Discussion}
\label{sec:discussion}

Our results highlight two recurring difficulties in long-horizon optimization through sequential implicit computation. 
First, each state is defined by a converged equilibrium solve rather than an explicit transition map, making direct backpropagation through all Newton--CG iterations memory- and compute-intensive. 
Neural Control addresses this by keeping the forward rollout solver-consistent, with each state solved to the prescribed equilibrium tolerance, while computing proxy-adjoint gradients from local implicit sensitivities. 
Second, multistable equilibrium systems are branch-dependent: small actuation changes or accumulated gradient errors can move a warm-started solver to an unintended basin of attraction. 
RHC mitigates this issue by acting as a continuation and branch-tracking mechanism, repeatedly re-anchoring optimization to realized equilibria and localizing proxy-gradient errors to short segments.

Although our main experiments focus on elastic-strip manipulation, the framework applies more broadly to differentiable implicit models whose states are obtained from converged solves and whose local sensitivities can be computed through the equilibrium residual. 
This includes analytical mechanics models, differentiable equilibrium simulators, and learned implicit equilibrium models, as illustrated by our DEQ-style validation.

The method has several limitations. 
It assumes a quasi-static regime and does not model inertia or fast transient dynamics. 
The derivation also assumes a locally smooth equilibrium branch with a nonsingular and reasonably conditioned $\mathbf G_x$; near bifurcations, singular equilibria, or high-curvature branches, the frozen-tangent approximation may become less accurate. 
In addition, performance depends on the quality of the differentiable implicit model, and unmodeled non-conservative effects such as friction, hysteresis, material damping, or intermittent hard contact can degrade transfer. 
Finally, equilibrium solves remain the dominant computational cost, so scaling to high-dimensional 2D/3D deformable bodies will require efficient solvers, preconditioning, warm starts, and adaptive continuation.

\section{Conclusion}
\label{sec:conclusion}

We presented Neural Control, a proxy-adjoint framework for learning and control through sequential implicit equilibria.
The method keeps forward rollouts on converged equilibrium states while avoiding backpropagation through the inner Newton--CG iterations, and combines the resulting trajectory-dependent sensitivities with receding-horizon continuation for robust branch tracking in multistable landscapes. 
Empirically, Adjoint+RHC achieves large efficiency gains across elastic-strip manipulation tasks: it is up to 195$\times$ faster than the main SPSA/iCEM baselines while reaching substantially lower best loss under the same fixed update budgets. 
Hardware experiments further demonstrate qualitative sim-to-real transfer on a 6-DoF robotic setup under gravity and experimentally identified material parameters. 
Together, these results suggest that proxy-adjoint continuation is a practical gradient pathway for optimizing compliant, multistable systems whose states are defined by implicit equilibrium solves.

\section*{Acknowledgements}
We acknowledge financial support from the National Institutes of Health (grant 1R01NS141171-01), the National Science Foundation (grant 2332555), and the National Science Foundation (grant 2332554).

\section*{Impact Statement}
This work develops methods for learning and control in equilibrium-defined
physical systems, with a focus on deformable object manipulation and
compliant structures. Potential positive impacts include more efficient
simulation-based control, improved robotic manipulation of flexible
objects, and broader use of differentiable implicit models in physical AI.
Potential risks arise if such controllers are deployed on hardware without
adequate validation, especially under model mismatch, unmodeled dynamics,
contact, or safety-critical constraints. We encourage deployment with
appropriate hardware limits, task-specific validation, safety constraints,
and human oversight.


\bibliography{example_paper}

@inproceedings{mora2021pods,
  title={Pods: Policy optimization via differentiable simulation},
  author={Mora, Miguel Angel Zamora and Peychev, Momchil and Ha, Sehoon and Vechev, Martin and Coros, Stelian},
  booktitle={International Conference on Machine Learning},
  pages={7805--7817},
  year={2021},
  organization={PMLR}
}

@article{li2023dexdeform,
  title={Dexdeform: Dexterous deformable object manipulation with human demonstrations and differentiable physics},
  author={Li, Sizhe and Huang, Zhiao and Chen, Tao and Du, Tao and Su, Hao and Tenenbaum, Joshua B and Gan, Chuang},
  journal={arXiv preprint arXiv:2304.03223},
  year={2023}
}

@article{tong2024sim2real,
  title={Sim2real neural controllers for physics-based robotic deployment of deformable linear objects},
  author={Tong, Dezhong and Choi, Andrew and Qin, Longhui and Huang, Weicheng and Joo, Jungseock and Jawed, Mohammad Khalid},
  journal={The International Journal of Robotics Research},
  volume={43},
  number={6},
  pages={791--810},
  year={2024},
  publisher={SAGE Publications Sage UK: London, England}
}

@article{qiao2020scalable,
  title={Scalable differentiable physics for learning and control},
  author={Qiao, Yi-Ling and Liang, Junbang and Koltun, Vladlen and Lin, Ming C},
  journal={arXiv preprint arXiv:2007.02168},
  year={2020}
}

@article{huang2025tutorial,
  title={A tutorial on simulating nonlinear behaviors of flexible structures with the discrete differential geometry (DDG) method},
  author={Huang, Weicheng and Hao, Zhuonan and Li, Jiahao and Tong, Dezhong and Guo, Kexin and Zhang, Yingchao and Gao, Huajian and Hsia, K Jimmy and Liu, Mingchao},
  journal={Applied Mechanics Reviews},
  pages={1--88},
  year={2025}
}

@article{bretl2014quasi,
  title={Quasi-static manipulation of a Kirchhoff elastic rod based on a geometric analysis of equilibrium configurations},
  author={Bretl, Timothy and McCarthy, Zoe},
  journal={The International Journal of Robotics Research},
  volume={33},
  number={1},
  pages={48--68},
  year={2014},
  publisher={SAGE Publications Sage UK: London, England}
}

@article{tong2021automated,
  title={Automated stability testing of elastic rods with helical centerlines using a robotic system},
  author={Tong, Dezhong and Borum, Andy and Jawed, Mohammad Khalid},
  journal={IEEE Robotics and Automation Letters},
  volume={7},
  number={2},
  pages={1126--1133},
  year={2021},
  publisher={IEEE}
}

@article{yin2021modeling,
  title={Modeling, learning, perception, and control methods for deformable object manipulation},
  author={Yin, Hang and Varava, Anastasia and Kragic, Danica},
  journal={Science Robotics},
  volume={6},
  number={54},
  pages={eabd8803},
  year={2021},
  publisher={American Association for the Advancement of Science}
}

@article{moll2006path,
  title={Path planning for deformable linear objects},
  author={Moll, Mark and Kavraki, Lydia E},
  journal={IEEE Transactions on Robotics},
  volume={22},
  number={4},
  pages={625--636},
  year={2006},
  publisher={IEEE}
}

@incollection{bergou2008discrete,
  title={Discrete elastic rods},
  author={Bergou, Mikl{\'o}s and Wardetzky, Max and Robinson, Stephen and Audoly, Basile and Grinspun, Eitan},
  booktitle={ACM SIGGRAPH 2008 papers},
  pages={1--12},
  year={2008}
}

@inproceedings{suh2022differentiable,
  title={Do differentiable simulators give better policy gradients?},
  author={Suh, Hyung Ju and Simchowitz, Max and Zhang, Kaiqing and Tedrake, Russ},
  booktitle={International Conference on Machine Learning},
  pages={20668--20696},
  year={2022},
  organization={PMLR}
}

@inproceedings{hausknecht2015deep,
  title={Deep Recurrent Q-Learning for Partially Observable MDPs.},
  author={Hausknecht, Matthew J and Stone, Peter},
  booktitle={AAAI fall symposia},
  volume={45},
  pages={141},
  year={2015}
}

@article{georgiev2024adaptive,
  title={Adaptive horizon actor-critic for policy learning in contact-rich differentiable simulation},
  author={Georgiev, Ignat and Srinivasan, Krishnan and Xu, Jie and Heiden, Eric and Garg, Animesh},
  journal={arXiv preprint arXiv:2405.17784},
  year={2024}
}

@inproceedings{gao2024adaptive,
  title={Adaptive-gradient policy optimization: Enhancing policy learning in non-smooth differentiable simulations},
  author={Gao, Feng and Shi, Liangzhi and Zhang, Shenao and Wang, Zhaoran and Wu, Yi},
  booktitle={Forty-first International Conference on Machine Learning},
  year={2024}
}

@article{bishop1994mixture,
  title={Mixture density networks},
  author={Bishop, Christopher M},
  year={1994},
  publisher={Aston University}
}

@inproceedings{hafner2019learning,
  title={Learning latent dynamics for planning from pixels},
  author={Hafner, Danijar and Lillicrap, Timothy and Fischer, Ian and Villegas, Ruben and Ha, David and Lee, Honglak and Davidson, James},
  booktitle={International conference on machine learning},
  pages={2555--2565},
  year={2019},
  organization={PMLR}
}

@article{hafner2019dream,
  title={Dream to control: Learning behaviors by latent imagination},
  author={Hafner, Danijar and Lillicrap, Timothy and Ba, Jimmy and Norouzi, Mohammad},
  journal={arXiv preprint arXiv:1912.01603},
  year={2019}
}

@article{williams1990efficient,
  title={An efficient gradient-based algorithm for on-line training of recurrent network trajectories},
  author={Williams, Ronald J and Peng, Jing},
  journal={Neural computation},
  volume={2},
  number={4},
  pages={490--501},
  year={1990},
  publisher={MIT Press One Rogers Street, Cambridge, MA 02142-1209, USA journals-info~…}
}

@article{bai2019deep,
  title={Deep equilibrium models},
  author={Bai, Shaojie and Kolter, J Zico and Koltun, Vladlen},
  journal={Advances in neural information processing systems},
  volume={32},
  year={2019}
}

@article{gruslys2016memory,
  title={Memory-efficient backpropagation through time},
  author={Gruslys, Audrunas and Munos, R{\'e}mi and Danihelka, Ivo and Lanctot, Marc and Graves, Alex},
  journal={Advances in neural information processing systems},
  volume={29},
  year={2016}
}

@inproceedings{amos2017optnet,
  title={Optnet: Differentiable optimization as a layer in neural networks},
  author={Amos, Brandon and Kolter, J Zico},
  booktitle={International conference on machine learning},
  pages={136--145},
  year={2017},
  organization={PMLR}
}

@article{chen2018neural,
  title={Neural ordinary differential equations},
  author={Chen, Ricky TQ and Rubanova, Yulia and Bettencourt, Jesse and Duvenaud, David K},
  journal={Advances in neural information processing systems},
  volume={31},
  year={2018}
}

@article{chua2018deep,
  title={Deep reinforcement learning in a handful of trials using probabilistic dynamics models},
  author={Chua, Kurtland and Calandra, Roberto and McAllister, Rowan and Levine, Sergey},
  journal={Advances in neural information processing systems},
  volume={31},
  year={2018}
}

@article{spall2002multivariate,
  title={Multivariate stochastic approximation using a simultaneous perturbation gradient approximation},
  author={Spall, James C},
  journal={IEEE transactions on automatic control},
  volume={37},
  number={3},
  pages={332--341},
  year={2002},
  publisher={IEEE}
}

@incollection{botev2013cross,
  title={The cross-entropy method for optimization},
  author={Botev, Zdravko I and Kroese, Dirk P and Rubinstein, Reuven Y and L’ecuyer, Pierre},
  booktitle={Handbook of statistics},
  volume={31},
  pages={35--59},
  year={2013},
  publisher={Elsevier}
}

@article{salimans2017evolution,
  title={Evolution strategies as a scalable alternative to reinforcement learning},
  author={Salimans, Tim and Ho, Jonathan and Chen, Xi and Sidor, Szymon and Sutskever, Ilya},
  journal={arXiv preprint arXiv:1703.03864},
  year={2017}
}

@article{mania2018simple,
  title={Simple random search of static linear policies is competitive for reinforcement learning},
  author={Mania, Horia and Guy, Aurelia and Recht, Benjamin},
  journal={Advances in neural information processing systems},
  volume={31},
  year={2018}
}

@inproceedings{amos2020differentiable,
  title={The differentiable cross-entropy method},
  author={Amos, Brandon and Yarats, Denis},
  booktitle={International Conference on Machine Learning},
  pages={291--302},
  year={2020},
  organization={PMLR}
}

@inproceedings{pinneri2021sample,
  title={Sample-efficient cross-entropy method for real-time planning},
  author={Pinneri, Cristina and Sawant, Shambhuraj and Blaes, Sebastian and Achterhold, Jan and Stueckler, Joerg and Rolinek, Michal and Martius, Georg},
  booktitle={Conference on Robot Learning},
  pages={1049--1065},
  year={2021},
  organization={PMLR}
}

@inproceedings{amos2018differentiable,
  title     = {Differentiable {MPC} for End-to-End Planning and Control},
  author    = {Amos, Brandon and Jimenez, Ivan and Sacks, Jacob and Boots, Byron and Kolter, J. Zico},
  booktitle = {Advances in Neural Information Processing Systems},
  volume    = {31},
  year      = {2018}
}

@article{jin2020pontryagin,
  title={Pontryagin differentiable programming: An end-to-end learning and control framework},
  author={Jin, Wanxin and Wang, Zhaoran and Yang, Zhuoran and Mou, Shaoshuai},
  journal={Advances in Neural Information Processing Systems},
  volume={33},
  pages={7979--7992},
  year={2020}
}

@article{hu2019difftaichi,
  title={Difftaichi: Differentiable programming for physical simulation},
  author={Hu, Yuanming and Anderson, Luke and Li, Tzu-Mao and Sun, Qi and Carr, Nathan and Ragan-Kelley, Jonathan and Durand, Fr{\'e}do},
  journal={arXiv preprint arXiv:1910.00935},
  year={2019}
}

@article{huang2021plasticinelab,
  title={Plasticinelab: A soft-body manipulation benchmark with differentiable physics},
  author={Huang, Zhiao and Hu, Yuanming and Du, Tao and Zhou, Siyuan and Su, Hao and Tenenbaum, Joshua B and Gan, Chuang},
  journal={arXiv preprint arXiv:2104.03311},
  year={2021}
}

@inproceedings{lin2021softgym,
  title={Softgym: Benchmarking deep reinforcement learning for deformable object manipulation},
  author={Lin, Xingyu and Wang, Yufei and Olkin, Jake and Held, David},
  booktitle={Conference on Robot Learning},
  pages={432--448},
  year={2021},
  organization={PMLR}
}

@article{freeman2021brax,
  title={Brax--a differentiable physics engine for large scale rigid body simulation},
  author={Freeman, C Daniel and Frey, Erik and Raichuk, Anton and Girgin, Sertan and Mordatch, Igor and Bachem, Olivier},
  journal={arXiv preprint arXiv:2106.13281},
  year={2021}
}

@article{qiao2021differentiable,
  title={Differentiable simulation of soft multi-body systems},
  author={Qiao, Yiling and Liang, Junbang and Koltun, Vladlen and Lin, Ming},
  journal={Advances in Neural Information Processing Systems},
  volume={34},
  pages={17123--17135},
  year={2021}
}

@inproceedings{hu2019chainqueen,
  title={Chainqueen: A real-time differentiable physical simulator for soft robotics},
  author={Hu, Yuanming and Liu, Jiancheng and Spielberg, Andrew and Tenenbaum, Joshua B and Freeman, William T and Wu, Jiajun and Rus, Daniela and Matusik, Wojciech},
  booktitle={2019 International conference on robotics and automation (ICRA)},
  pages={6265--6271},
  year={2019},
  organization={IEEE}
}

@article{du2021diffpd,
  title={Diffpd: Differentiable projective dynamics},
  author={Du, Tao and Wu, Kui and Ma, Pingchuan and Wah, Sebastien and Spielberg, Andrew and Rus, Daniela and Matusik, Wojciech},
  journal={ACM Transactions on Graphics (ToG)},
  volume={41},
  number={2},
  pages={1--21},
  year={2021},
  publisher={ACM New York, NY}
}

@inproceedings{grinspun2003discrete,
author = {Grinspun, Eitan and Hirani, Anil N. and Desbrun, Mathieu and Schr\"{o}der, Peter},
title = {Discrete shells},
year = {2003},
isbn = {1581136595},
publisher = {Eurographics Association},
address = {Goslar, DEU},
booktitle = {Proceedings of the 2003 ACM SIGGRAPH/Eurographics Symposium on Computer Animation},
pages = {62–67},
numpages = {6},
location = {San Diego, California},
series = {SCA '03}
}

@inproceedings{sanchez2020gns,
  author    = {Alvaro Sanchez{-}Gonzalez and Jonathan Godwin and Tobias Pfaff and Rex Ying and Jure Leskovec and Peter W. Battaglia},
  title     = {Learning to Simulate Complex Physics with Graph Networks},
  booktitle = {Proceedings of the 37th International Conference on Machine Learning (ICML)},
  series    = {Proceedings of Machine Learning Research},
  volume    = {119},
  pages     = {8459--8468},
  year      = {2020},
  publisher = {PMLR},
  url       = {http://proceedings.mlr.press/v119/sanchez-gonzalez20a.html}
}

@inproceedings{pfaff2021meshgraphnets,
  author    = {Tobias Pfaff and Meire Fortunato and Alvaro Sanchez{-}Gonzalez and Peter W. Battaglia},
  title     = {Learning Mesh-Based Simulation with Graph Networks},
  booktitle = {9th International Conference on Learning Representations (ICLR)},
  year      = {2021},
  publisher = {OpenReview.net},
  url       = {https://openreview.net/forum?id=roNqYL0\_XP}
}

@inproceedings{xing2025sapo,
  author    = {Eliot Xing and Vernon Luk and Jean Oh},
  title     = {Stabilizing Reinforcement Learning in Differentiable Multiphysics Simulation},
  booktitle = {The Thirteenth International Conference on Learning Representations (ICLR)},
  year      = {2025},
  publisher = {OpenReview.net},
  url       = {https://openreview.net/forum?id=DRiLWb8bJg}
}

@article{agrawal2019differentiable,
  title={Differentiable convex optimization layers},
  author={Agrawal, Akshay and Amos, Brandon and Barratt, Shane and Boyd, Stephen and Diamond, Steven and Kolter, J Zico},
  journal={Advances in neural information processing systems},
  volume={32},
  year={2019}
}

@article{tong2025discrete,
  title={Discrete differential geometry for simulating nonlinear behaviors of flexible systems: A survey},
  author={Tong, Dezhong and Choi, Andrew and Wang, Jiaqi and Huang, Weicheng and Chen, Zexiong and Li, Jiahao and Huang, Xiaonan and Liu, Mingchao and Gao, Huajian and Hsia, K Jimmy},
  journal={Extreme Mechanics Letters},
  pages={102430},
  year={2025},
  publisher={Elsevier}
}
\bibliographystyle{icml2026}



\newpage
\appendix
\onecolumn

\section{Discrete Proxy-Adjoint Derivation and Implementation Details}
\label{app:discrete_proxy_adjoint}

This appendix derives the discrete proxy-adjoint recursions used in
Section~\ref{sec:proxy_adjoints} and describes the implementation used
to compute policy gradients without differentiating through the inner
equilibrium solver. The key primitive is a matrix-free routine for
products $\mathbf S_k^\top\mathbf v$, where
$\mathbf S_k=\partial\mathbf x_k/\partial\mathbf z_k$ is the local
equilibrium sensitivity at continuation step $k$.

\paragraph{Forward rollout and notation.}

We discretize $\lambda\in[0,1]$ into
$\{\lambda_k\}_{k=0}^{K}$ with step size
$\Delta\lambda=1/K$. At each step $k=0,\ldots,K-1$, the controller
outputs
\begin{equation}
\mathbf u_k=\mathbf u_\Theta(\lambda_k),
\end{equation}
and the continuation variable is updated explicitly as
\begin{equation}
\mathbf z_{k+1}
=
\mathbf z_k+\Delta\lambda\,f(\mathbf z_k,\mathbf u_k).
\label{eq:app:z_update}
\end{equation}
The state $\mathbf x_k$ is defined as the converged equilibrium at
$\mathbf z_k$, warm-started from the previous solution:
\begin{equation}
\mathbf x_k
=
\mathrm{SolveEq}(\mathbf z_k;\ \mathrm{init}=\mathbf x_{k-1}),
\qquad
\mathcal G(\mathbf x_k,\mathbf z_k)=\mathbf 0 .
\label{eq:app:exact_equilibrium}
\end{equation}
The forward rollout always uses these converged equilibria. The proxy dynamics introduced below is used only for the backward sensitivity
calculation.

Along the realized rollout, define the equilibrium Jacobians
\begin{equation}
\mathbf G_{x,k}:=\left.\frac{\partial\mathcal G}{\partial\mathbf x}\right|_{(\mathbf x_k,\mathbf z_k,\lambda_k)},
\qquad
\mathbf G_{z,k}:=\left.\frac{\partial\mathcal G}{\partial\mathbf z}\right|_{(\mathbf x_k,\mathbf z_k,\lambda_k)},
\label{eq:app:GxGz_defs}
\end{equation}
and the continuation-dynamics Jacobians
\begin{equation}
\mathbf A_k:=\left.\frac{\partial f}{\partial\mathbf z}\right|_{(\mathbf z_k,\mathbf u_k)},
\qquad
\mathbf B_k:=\left.\frac{\partial f}{\partial\mathbf u}\right|_{(\mathbf z_k,\mathbf u_k)}.
\label{eq:app:AB_defs}
\end{equation}

\paragraph{Implicit sensitivity and $\mathbf S_k^\top\mathbf v$ products.}
Assuming that the realized branch is locally smooth and that
$\mathbf G_{x,k}$ is nonsingular, implicit differentiation of
$\mathcal G(\mathbf x_k,\mathbf z_k)=\mathbf 0$ gives
\begin{equation}
\mathbf S_k
:=
\frac{\partial\mathbf x_k}{\partial\mathbf z_k},
\qquad
\mathbf G_{x,k}\mathbf S_k=-\mathbf G_{z,k}.
\label{eq:app:S_def}
\end{equation}
We never form $\mathbf S_k$ explicitly. Instead, for any vector
$\mathbf v$, we compute $\mathbf S_k^\top\mathbf v$ by solving the
discrete adjoint linear system
\begin{equation}
\mathbf G_{x,k}^\top\boldsymbol\eta_k=\mathbf v,
\label{eq:app:adjoint_solve}
\end{equation}
and returning
\begin{equation}
\mathbf S_k^\top\mathbf v
=
-\mathbf G_{z,k}^\top\boldsymbol\eta_k .
\label{eq:app:Stv}
\end{equation}
This matrix-free product is the only place where equilibrium
sensitivities enter the gradient computation.

\paragraph{Proxy dynamics and frozen tangent}
Since $\mathbf x_k$ is an implicit function of $\mathbf z_k$, the
first-order branch-consistent variation satisfies
\begin{equation}
\delta\mathbf x_k
\approx
\mathbf S_k\,\delta\mathbf z_k .
\end{equation}
Using the explicit continuation update~\eqref{eq:app:z_update}, we define
the discrete proxy transition
\begin{equation}
\mathbf x_{k+1}
\approx
\mathbf x_k
+
\Delta\lambda\,\mathbf S_k f(\mathbf z_k,\mathbf u_k).
\label{eq:app:x_proxy}
\end{equation}
Equation~\eqref{eq:app:x_proxy} is not used to generate the forward
trajectory; the forward pass still solves
Eq.~\eqref{eq:app:exact_equilibrium} to convergence. Instead,
Eq.~\eqref{eq:app:x_proxy} defines a tractable backward pass.

The frozen-tangent approximation treats $\mathbf S_k$ as locally constant
during the backward pass. Equivalently, we ignore derivatives of
$\mathbf S_k$ with respect to $\mathbf x_k$, $\mathbf z_k$, and
$\lambda_k$. This avoids higher-order derivatives of the equilibrium
residual. Receding-horizon continuation limits the accumulation of this
proxy mismatch by restricting the backward propagation horizon
(Section~\ref{sec:receding_horizon_control}).

\paragraph{Unified discrete objective.}
We write terminal and trajectory objectives in the form
\begin{equation}
\mathcal L
=
\phi(\mathbf x_K,\mathbf z_K)
+
\sum_{k=0}^{K-1}
w_k\,\ell(\mathbf x_k,\mathbf z_k),
\label{eq:app:unified_loss}
\end{equation}
where $\phi$ is an optional terminal loss and $\ell$ is an optional
running loss. Terminal-only objectives use $w_k\equiv0$; trajectory
objectives set $\phi\equiv0$ if no terminal loss is needed. The scalar
$w_k$ includes the quadrature weight; for a Riemann approximation of
$\int_0^1 w(\lambda)\ell\,d\lambda$, one may take
$w_k=\Delta\lambda\,w(\lambda_k)$.

\paragraph{Proxy adjoints.}
Define proxy-system adjoints
\begin{equation}
\mathbf a_k
:=
\frac{\partial\mathcal L}{\partial\mathbf x_k},
\qquad
\mathbf g_k
:=
\frac{\partial\mathcal L}{\partial\mathbf z_k}.
\end{equation}
The terminal conditions are
\begin{equation}
\mathbf a_K
=
\nabla_{\mathbf x}\phi(\mathbf x_K,\mathbf z_K),
\qquad
\mathbf g_K
=
\nabla_{\mathbf z}\phi(\mathbf x_K,\mathbf z_K).
\label{eq:app:terminal_bc_unified}
\end{equation}
The dependence of the equilibrium state on the continuation variable is
accounted for through $\mathbf S_k^\top\mathbf a_{k+1}$ in the backward
recursion and control gradient below.

Under the proxy transition~\eqref{eq:app:x_proxy} and frozen tangent, the
linearized transitions are
\begin{equation}
\frac{\partial\mathbf z_{k+1}}{\partial\mathbf z_k}
=
\mathbf I+\Delta\lambda\,\mathbf A_k,
\qquad
\frac{\partial\mathbf z_{k+1}}{\partial\mathbf u_k}
=
\Delta\lambda\,\mathbf B_k,
\label{eq:app:dz_jacs}
\end{equation}
and
\begin{equation}
\frac{\partial\mathbf x_{k+1}}{\partial\mathbf x_k}
=
\mathbf I,
\qquad
\frac{\partial\mathbf x_{k+1}}{\partial\mathbf z_k}
\approx
\Delta\lambda\,\mathbf S_k\mathbf A_k,
\qquad
\frac{\partial\mathbf x_{k+1}}{\partial\mathbf u_k}
\approx
\Delta\lambda\,\mathbf S_k\mathbf B_k .
\label{eq:app:dx_jacs}
\end{equation}
Therefore, for $k=K-1,\ldots,0$, the adjoints satisfy
\begin{equation}
\mathbf a_k
\approx
\mathbf a_{k+1}
+
w_k\,\nabla_{\mathbf x}\ell(\mathbf x_k,\mathbf z_k),
\label{eq:app:a_rec_unified}
\end{equation}
and
\begin{equation}
\begin{aligned}
\mathbf g_k
\approx\;&
(\mathbf I+\Delta\lambda\,\mathbf A_k)^\top\mathbf g_{k+1}
+
\Delta\lambda\,\mathbf A_k^\top
\big(\mathbf S_k^\top\mathbf a_{k+1}\big)
\\
&+
w_k\,\nabla_{\mathbf z}\ell(\mathbf x_k,\mathbf z_k).
\end{aligned}
\label{eq:app:g_rec_unified}
\end{equation}
Each product $\mathbf S_k^\top(\cdot)$ is evaluated using
Eq.~\eqref{eq:app:Stv}. Notice that
$\mathbf S_k^\top\nabla_{\mathbf x}\ell$ does not appear separately in
Eq.~\eqref{eq:app:g_rec_unified}; the running-state loss contribution is
first accumulated in $\mathbf a_k$ through
Eq.~\eqref{eq:app:a_rec_unified}, and then converted to a
$\mathbf z$-space contribution through $\mathbf S_k^\top\mathbf a$.

For convenience define the effective continuation adjoint
\begin{equation}
\mathbf d_k
:=
\mathbf g_{k+1}
+
\mathbf S_k^\top\mathbf a_{k+1}.
\label{eq:app:d_def}
\end{equation}
The control gradient at each step is
\begin{equation}
\frac{d\mathcal L}{d\mathbf u_k}
\approx
\Delta\lambda\,\mathbf B_k^\top\mathbf d_k.
\label{eq:app:dL_du_unified}
\end{equation}
The policy-parameter gradient then follows by the chain rule through the
controller:
\begin{equation}
\frac{d\mathcal L}{d\Theta}
=
\sum_{k=0}^{K-1}
\left(
\frac{\partial\mathbf u_\Theta(\lambda_k)}{\partial\Theta}
\right)^\top
\frac{d\mathcal L}{d\mathbf u_k}.
\label{eq:app:dL_dTheta}
\end{equation}

\paragraph{Implementation.}
Algorithm~\ref{alg:adjoint_rhc} summarizes the implementation of
proxy-adjoint training with receding-horizon continuation.

\begin{algorithm}[!h]
  \caption{Adjoint+RHC: proxy-adjoint training with receding-horizon continuation}
  \label{alg:adjoint_rhc}
  \begin{algorithmic}
    \STATE {\bfseries Input:} segment horizon $H$, number of segments $M$, step size $\Delta\lambda$
    \STATE {\bfseries Input:} initial $(\mathbf x^{(0)},\mathbf z^{(0)})$, controller $\mathbf u_\Theta(\lambda)$, continuation dynamics $f(\mathbf z,\mathbf u)$
    \STATE {\bfseries Input:} segment update budget $T_{\mathrm{seg}}$, optimizer step $\mathrm{OptStep}(\Theta, d\mathcal L/d\Theta)$
    \STATE {\bfseries Helper:} $\mathrm{StProd}(k,\mathbf v)$ computes $\mathbf S_k^\top\mathbf v$:
    \STATE \hspace{1.5em} solve $\mathbf G_{x,k}^\top\boldsymbol\eta_k=\mathbf v$, return $-\mathbf G_{z,k}^\top\boldsymbol\eta_k$
    \STATE
    \FOR{$m=0$ {\bfseries to} $M-1$}
      \STATE Set segment start $(\mathbf x_0,\mathbf z_0)\gets(\mathbf x^{(m)},\mathbf z^{(m)})$
      \STATE Initialize $\Theta\gets\Theta_0$
      \FOR{$t=1$ {\bfseries to} $T_{\mathrm{seg}}$}
        \STATE {\itshape Forward rollout with converged equilibria.}
        \FOR{$k=0$ {\bfseries to} $H-1$}
          \STATE $\lambda_k^{(m)}\gets (mH+k)\Delta\lambda$
          \STATE $\mathbf u_k\gets\mathbf u_\Theta(\lambda_k^{(m)})$
          \STATE Cache $\mathbf G_{x,k},\mathbf G_{z,k},\mathbf A_k,\mathbf B_k$ at $(\mathbf x_k,\mathbf z_k,\mathbf u_k)$
          \STATE $\mathbf z_{k+1}\gets\mathbf z_k+\Delta\lambda\,f(\mathbf z_k,\mathbf u_k)$
          \STATE $\mathbf x_{k+1}\gets\mathrm{SolveEq}(\mathbf z_{k+1};\ \mathrm{init}=\mathbf x_k)$
        \ENDFOR
        \STATE Compute segment loss $\mathcal L$ on the realized segment rollout
        \STATE {\itshape Backward pass with proxy adjoints and frozen tangent.}
        \STATE Initialize $\mathbf a_H\gets\partial\mathcal L/\partial\mathbf x_H$
        \STATE Initialize $\mathbf g_H\gets\partial\mathcal L/\partial\mathbf z_H$
        \FOR{$k=H-1$ {\bfseries to} $0$}
          \STATE $\mathbf s_k\gets\mathrm{StProd}(k,\mathbf a_{k+1})$
          \STATE $\mathbf d_k\gets\mathbf g_{k+1}+\mathbf s_k$
          \STATE $d\mathcal L/d\mathbf u_k\gets\Delta\lambda\,\mathbf B_k^\top\mathbf d_k$
          \STATE $\mathbf a_k\gets\mathbf a_{k+1}+w_k\,\nabla_{\mathbf x}\ell(\mathbf x_k,\mathbf z_k)$
          \STATE $\mathbf g_k\gets(\mathbf I+\Delta\lambda\,\mathbf A_k)^\top\mathbf g_{k+1}
          +\Delta\lambda\,\mathbf A_k^\top\mathbf s_k
          +w_k\,\nabla_{\mathbf z}\ell(\mathbf x_k,\mathbf z_k)$
        \ENDFOR
        \STATE $d\mathcal L/d\Theta\gets
        \sum_{k=0}^{H-1}
        \left(\partial\mathbf u_\Theta(\lambda_k^{(m)})/\partial\Theta\right)^\top
        (d\mathcal L/d\mathbf u_k)$
        \STATE $\Theta\gets\mathrm{OptStep}(\Theta,d\mathcal L/d\Theta)$
      \ENDFOR
      \STATE {\itshape Execute segment once with optimized $\Theta$.}
      \STATE Roll out from $(\mathbf x^{(m)},\mathbf z^{(m)})$ with optimized $\Theta$ to obtain terminal $(\mathbf x_H,\mathbf z_H)$
      \STATE Set $(\mathbf x^{(m+1)},\mathbf z^{(m+1)})\gets(\mathbf x_H,\mathbf z_H)$
    \ENDFOR
    \STATE {\bfseries Output:} concatenated executed rollout and task loss evaluated over all segments
  \end{algorithmic}
\end{algorithm}

\section{Elastic Strip Equilibrium Model and Implicit Solver}
\label{app:strip_solver}
This section instantiates the equilibrium residual
$\mathcal G(\mathbf x,\mathbf z)$ and the implicit solver
$\mathbf x_k=\mathrm{SolveEq}(\mathbf z_k)$ used in the elastic-strip
experiments.

\paragraph{State and controls.}
\begin{figure}
\centering\includegraphics[width=0.5\linewidth]{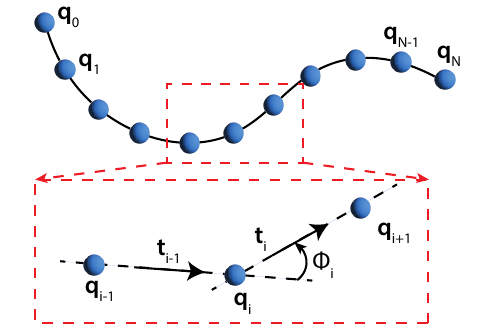}
  \caption{Discrete schematic of the elastic strip. The centerline is
  represented by nodal positions $\{\mathbf q_i\}$, edge vectors
  $\{\mathbf e_i\}$, unit tangents $\{\mathbf t_i\}$, and turning angles
  $\{\phi_i\}$.}
    \label{fig:discrete_schematic}
\end{figure}
As shown in Fig.~\ref{fig:discrete_schematic}, we discretize the strip centerline into $N{+}1$ nodes
$\{\mathbf q_i\}_{i=0}^{N}$ with $\mathbf q_i\in\mathbb R^d$ ($d{=}2$ or $3$), and stack them as
$\mathbf q=[\mathbf q_0^\top,\ldots,\mathbf q_N^\top]^\top$.
We clamp two nodes at each end and treat them as boundary (controllable) DoFs,
\begin{equation}
\begin{aligned}
\mathbf q_b &:= [\mathbf q_0^\top,\mathbf q_1^\top,\mathbf q_{N-1}^\top,\mathbf q_N^\top]^\top,\\
\mathbf q_f &:= [\mathbf q_2^\top,\ldots,\mathbf q_{N-2}^\top]^\top .
\end{aligned}
\end{equation}
We identify $\mathbf x \equiv \mathbf q_f$ and $\mathbf z \equiv \mathbf q_b$.

\paragraph{Energy model.}
Strip equilibrium is defined by a stationary point of the total potential energy
\begin{equation}
E_{\mathrm{tot}}(\mathbf q_f)
=
E_s(\mathbf q_f, \mathbf q_b)
+
E_b(\mathbf q_f, \mathbf q_b)
+
E_g(\mathbf q_f, \mathbf q_b),
\label{eq:app:Etot_strip}
\end{equation}
including stretching, bending, and gravity.\footnote{We do not model twist; the strip is represented by its centerline with stretching and bending energies.}

\paragraph{Stretching energy.}
Let $\Delta l$ be the undeformed edge length and $\mathbf e_i := \mathbf q_{i+1}-\mathbf q_i$ for $i=0,\ldots,N-1$.
We use
\begin{equation}
E_s
=
\frac{1}{2} k_s \sum_{i=0}^{N-1}
\left(
1 - \frac{\|\mathbf e_i\|}{\Delta l}
\right)^2 \Delta l .
\label{eq:stretching}
\end{equation}

\paragraph{Bending energy.}
Let $\phi_i$ be the turning angle at node $i$ (Fig.~\ref{fig:discrete_schematic}) and $\phi_i^0$ the rest angle (zero in our study).
We use
\begin{equation}
E_b
=
\frac{1}{2} k_b \sum_{i=1}^{N-1}
\left(
2\tan\frac{\phi_i}{2} - 2\tan\frac{\phi_i^0}{2}
\right)^2 \frac{1}{\Delta l}.
\label{eq:bending}
\end{equation}

\paragraph{Gravitational energy.}
We use a lumped-mass gravity potential
\begin{equation}
E_g
=
-\sum_{i=0}^{N} m_i\, \mathbf g^\top \mathbf q_i ,
\label{eq:app:gravity}
\end{equation}
where $\mathbf g\in\mathbb R^d$ is the gravitational acceleration and $m_i$ is the lumped nodal mass.

\paragraph{Equilibrium residual.}
Given $\mathbf q_b$, the free configuration returned by
$\mathrm{SolveEq}$ is a locally stable equilibrium selected by the
warm-started solver. Equivalently, it satisfies the stationarity condition
\begin{equation}
\nabla_{\mathbf q_f}E_{\mathrm{tot}}(\mathbf q_f^\star,\mathbf q_b)
=
\mathbf 0,
\label{eq:app:strip_equilibrium_stationary}
\end{equation}
with the particular local branch determined by the previous continuation
step. We therefore define
\begin{equation}
\begin{aligned}
&\mathcal G(\mathbf x,\mathbf z)
:=
\nabla_{\mathbf q_f} E_{\mathrm{tot}}(\mathbf q_f,\mathbf q_b), \\
\textrm{with}\qquad
&\mathbf x\equiv \mathbf q_f,\quad \mathbf z\equiv \mathbf q_b .
\end{aligned}
\label{eq:app:strip_G_def}
\end{equation}
The Jacobians $\mathbf G_{x,k}$ and $\mathbf G_{z,k}$ in
Eq.~\eqref{eq:app:GxGz_defs} are the corresponding derivatives of
$\mathcal G$ evaluated at $(\mathbf q_{f,k},\mathbf q_{b,k})$; namely,
$\mathbf G_{x,k}=\nabla^2_{\mathbf q_f\mathbf q_f}E_{\mathrm{tot}}$ and
$\mathbf G_{z,k}=\nabla^2_{\mathbf q_f\mathbf q_b}E_{\mathrm{tot}}$.

\paragraph{Implicit solver (\texttt{SolveEq}).}
We implement $\mathbf x_k=\mathrm{SolveEq}(\mathbf z_k,\lambda_k)$ using a trust-region Newton--CG method applied to the stationarity condition~\eqref{eq:app:strip_equilibrium_stationary}. At iterate $\mathbf q_f^{(t)}$, we assemble
$E_\mathrm{tot}^{(t)}$, $\mathbf g^{(t)}:=\nabla_{\mathbf q_f}E_{\mathrm{tot}}$, and
$\mathbf H^{(t)}:=\nabla_{\mathbf q_f}^2E_{\mathrm{tot}}$, and symmetrize
$\mathbf H^{(t)}\leftarrow\frac12(\mathbf H^{(t)}+\mathbf H^{(t)\top})$.

\paragraph{Trust-region step.}
We approximately solve
\begin{equation}
\min_{\|\mathbf p\|_2 \le \Delta^{(t)}}\;
m^{(t)}(\mathbf p)
:=
\mathbf g^{(t)\top}\mathbf p
+\frac{1}{2}\mathbf p^\top \mathbf H^{(t)}\mathbf p .
\label{eq:app:tr_subproblem}
\end{equation}
If $\mathbf H^{(t)}$ is SPD (via Cholesky-type factorization), we compute the Newton step
$\mathbf H^{(t)}\mathbf p_N^{(t)}=-\mathbf g^{(t)}$ and take $\mathbf p^{(t)}=\mathbf p_N^{(t)}$ if feasible;
otherwise we take a standard dogleg boundary step between the Cauchy and Newton directions.
If SPD detection fails, we use Steihaug PCG with Hessian-vector products, terminating on negative curvature
$\mathbf d^\top \mathbf H^{(t)}\mathbf d\le 0$ or on hitting $\|\mathbf p\|_2=\Delta^{(t)}$.
We use a Jacobi preconditioner
$\mathbf M^{(t)}=\mathrm{diag}(|\mathrm{diag}(\mathbf H^{(t)})|)+\sigma\mathbf I$.

\paragraph{Acceptance, radius update, and termination.}
Let $\mathrm{pred}^{(t)}=-m^{(t)}(\mathbf p^{(t)})$ and
$\mathrm{ared}^{(t)}=E^{(t)}-E_{\mathrm{tot}}(\mathbf q_f^{(t)}+\mathbf p^{(t)},\mathbf q_b;\lambda)$, and define
$\rho^{(t)}=\mathrm{ared}^{(t)}/\max(\mathrm{pred}^{(t)},\epsilon)$.
We accept the step if $\rho^{(t)}\ge \eta$, and update $\Delta^{(t)}$ using standard shrink/expand rules based on $\rho^{(t)}$.
We terminate when $\|\mathbf g^{(t)}\|_2\le \varepsilon_g$, $\Delta^{(t)}\le \varepsilon_\Delta$, or after a maximum number of iterations.
We warm-start continuation step $k$ with the previous equilibrium $\mathbf q_{f,k}^{(0)}\leftarrow \mathbf q_{f,k-1}^\star$;
under multistability, this initialization selects a specific locally stable equilibrium branch for the same $\mathbf z$.

\section{Assumptions and Consistency of the Frozen-Tangent Proxy Gradient}
\label{app:frozen_tangent}

This section summarizes the assumptions under which the frozen-tangent
proxy gradient used in Sec.~\ref{sec:proxy_adjoints} provides a
controlled local approximation to the gradient along the realized
equilibrium branch.

We consider the equilibrium constraint
\begin{equation}
\mathcal G(\mathbf x(\lambda),\mathbf z(\lambda))=\mathbf 0,
\qquad \lambda\in[0,1],
\label{eq:app_G}
\end{equation}
where $\mathbf z(\lambda)$ is generated by the continuation dynamics
$d\mathbf z/d\lambda=f(\mathbf z(\lambda),\mathbf u(\lambda))$.
Let
\[
\mathbf G_x=\frac{\partial\mathcal G}{\partial\mathbf x},
\qquad
\mathbf G_z=\frac{\partial\mathcal G}{\partial\mathbf z},
\qquad
\mathbf G_\lambda=\frac{\partial\mathcal G}{\partial\lambda}.
\]

\begin{assumption}[Smooth realized branch]
\label{assm:branch}
The residual $\mathcal G$ is $C^2$ in a neighborhood of the realized
rollout, and the warm-started solver follows a locally unique equilibrium
branch $\mathbf x(\lambda)\in C^1([0,1])$. Along this branch,
$\mathbf G_x(\lambda)$ is uniformly nonsingular:
\begin{equation}
\sup_{\lambda\in[0,1]}
\left\|\mathbf G_x(\lambda)^{-1}\right\| < \infty .
\end{equation}
This assumption excludes singular equilibria, loss of local stability,
and basin switches, where $\mathbf G_x$ may become ill-conditioned or
singular.
\end{assumption}

\paragraph{Implicit sensitivity.}
Under Assumption~\ref{assm:branch}, the implicit function theorem gives
a local equilibrium map $\mathbf x^\star(\mathbf z,\lambda)$. Its
sensitivity with respect to $\mathbf z$ is
\begin{equation}
\mathbf S(\lambda)
:=
\frac{\partial\mathbf x^\star}{\partial\mathbf z}(\lambda),
\qquad
\mathbf G_x(\lambda)\mathbf S(\lambda)
=
-\mathbf G_z(\lambda).
\label{eq:app_S}
\end{equation}
Differentiating Eq.~\eqref{eq:app_G} along the realized branch gives
\begin{equation}
\frac{d\mathbf x}{d\lambda}
=
\mathbf S(\lambda)\frac{d\mathbf z}{d\lambda}
-
\mathbf G_x(\lambda)^{-1}\mathbf G_\lambda(\lambda).
\label{eq:app_exact_dxdl}
\end{equation}
In our experiments, explicit $\lambda$-dependence is either absent or
absorbed into $\mathbf z(\lambda)$, so $\mathbf G_\lambda\equiv 0$ and
\begin{equation}
\frac{d\mathbf x}{d\lambda}
=
\mathbf S(\lambda) f(\mathbf z(\lambda),\mathbf u(\lambda)).
\label{eq:app_exact_dxdl_simplified}
\end{equation}

\paragraph{Frozen-tangent approximation.}
On a discretization
$0=\lambda_0<\lambda_1<\cdots<\lambda_K=1$, the backward pass evaluates
$\mathbf S_k:=\mathbf S(\lambda_k)$ on the realized rollout and treats it
as constant over the interval $[\lambda_k,\lambda_{k+1}]$. The local
tangent variation on this interval is
\begin{equation}
\eta_k
:=
\sup_{\lambda\in[\lambda_k,\lambda_{k+1}]}
\left\|
\mathbf S(\lambda)-\mathbf S_k
\right\|.
\label{eq:app_eta_k}
\end{equation}
Let
\[
\eta_S:=\max_k \eta_k .
\]
If $\mathbf S$ is Lipschitz continuous with constant $L_S$, then
\begin{equation}
\eta_S \le L_S\Delta\lambda,
\qquad
\Delta\lambda=\max_k(\lambda_{k+1}-\lambda_k).
\label{eq:app_lipschitz_S}
\end{equation}

\begin{proposition}[Local consistency of the frozen-tangent gradient]
\label{prop:proxy_consistency}
Consider the trajectory objective in Eq.~\eqref{eq:traj_loss_cont} over a
fixed continuation segment. Let $\nabla_\Theta\mathcal L$ denote the
branch-consistent gradient obtained by differentiating along the realized
smooth equilibrium branch, and let
$\nabla_\Theta\mathcal L_{\mathrm{proxy}}$ denote the gradient obtained by
freezing $\mathbf S(\lambda)$ on each continuation interval. Suppose
Assumption~\ref{assm:branch} holds, $\mathbf S(\lambda)$ has bounded
local variation on the segment, and $f$, its first derivatives, the
controller Jacobian $\partial\mathbf u_\Theta/\partial\Theta$, and the
objective derivatives are uniformly bounded along the realized rollout.
Then
\begin{equation}
\left\|
\nabla_\Theta\mathcal L_{\mathrm{proxy}}
-
\nabla_\Theta\mathcal L
\right\|
\le
C\,\eta_S,
\label{eq:app_grad_bound}
\end{equation}
where $C$ depends on the above bounds and the segment length, but not on
$\eta_S$. In particular, if $\mathbf S$ is Lipschitz continuous, then
$\eta_S=O(\Delta\lambda)$ and the proxy-gradient error is
$O(\Delta\lambda)$ on a fixed segment.
\end{proposition}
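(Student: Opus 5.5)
We will compare two linear backward (costate) systems driven by the same realized rollout. Both use the adjoint ODEs of Eq.~\eqref{eq:traj_a_g_ode}; they differ only in the matrix multiplying $\mathbf a$. In the exact branch-consistent system the matrix is $\mathbf S(\lambda)$. In the proxy system it is the piecewise-constant $\bar{\mathbf S}(\lambda):=\mathbf S_k$ for $\lambda\in[\lambda_k,\lambda_{k+1})$.

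\textbf{Step 1: make the exact gradient concrete.} Under Assumption~\ref{assm:branch}, the realized branch is $\mathbf x(\lambda)=\mathbf x^\star(\mathbf z(\lambda))$ with $\mathbf x^\star$ of class $C^1$. Hence $\mathcal L$ is a function of $\mathbf z(\cdot)$ alone. A variation $\delta\mathbf u$ produces the tangent system $\delta\dot{\mathbf z}=\mathbf A\delta\mathbf z+\mathbf B\delta\mathbf u$ together with $\delta\mathbf x=\mathbf S\delta\mathbf z$.

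The exact Lagrangian adjoint is then the $\mathbf z$-costate $\mathbf p$ satisfying $-\dot{\mathbf p}=\mathbf A^\top\mathbf p+w(\nabla_{\mathbf z}\ell+\mathbf S^\top\nabla_{\mathbf x}\ell)$, with $\mathbf p(1)=\nabla_{\mathbf z}\phi+\mathbf S(1)^\top\nabla_{\mathbf x}\phi$. The control gradient is $\mathbf B^\top\mathbf p$.

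Next we check that the pair $(\mathbf a,\mathbf g)$ of Eq.~\eqref{eq:traj_a_g_ode} with $\mathbf S$ unfrozen reproduces this. Set $\mathbf d:=\mathbf g+\mathbf S^\top\mathbf a$. The $\mathbf a$-equation gives $\mathbf a(\lambda)=\nabla_{\mathbf x}\phi+\int_\lambda^1 w\nabla_{\mathbf x}\ell$. The quantity $\mathbf d$ differs from $\mathbf p$ only through terms involving $\dot{\mathbf S}$, which is exactly what the frozen-tangent construction drops.

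To avoid an ambiguity in which "exact" object is meant, I will define the reference gradient $\nabla_\Theta\mathcal L$ as the one generated by the exact $\mathbf p$. I will then directly compare $\mathbf d_{\mathrm{proxy}}:=\mathbf g_{\mathrm{proxy}}+\bar{\mathbf S}^\top\mathbf a$ with $\mathbf p$. Both satisfy a linear ODE driven by $\mathbf A^\top$, so the comparison reduces to estimating forcing and terminal differences.

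\textbf{Step 2: error equation.} Let $\mathbf e:=\mathbf d_{\mathrm{proxy}}-\mathbf p$. On each subinterval $\bar{\mathbf S}$ is constant, so $\mathbf e$ satisfies $-\dot{\mathbf e}=\mathbf A^\top\mathbf e+\mathbf r$. The residual $\mathbf r$ is built from $(\bar{\mathbf S}-\mathbf S)^\top$ applied to $w\nabla_{\mathbf x}\ell$ and to $\mathbf a$, plus $\mathbf A^\top(\bar{\mathbf S}-\mathbf S)^\top\mathbf a$. The terminal error is $(\mathbf S_{K}-\mathbf S(1))^\top\nabla_{\mathbf x}\phi$, which is zero if the last node is sampled, and otherwise at most $\eta_S\|\nabla_{\mathbf x}\phi\|$.

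Because $\mathbf a$ is bounded by $\|\nabla_{\mathbf x}\phi\|_\infty+\|w\nabla_{\mathbf x}\ell\|_{L^1}$, we get $\|\mathbf r\|\le c\,\eta_S$. Here $c$ depends on the bounds on $\|\mathbf A\|$, $w$, and the objective derivatives. Grönwall's inequality on the segment then gives
\[
\sup_\lambda\|\mathbf e(\lambda)\|\le (c+c')\,e^{\|\mathbf A\|_\infty}\,\eta_S .
\]

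\textbf{Step 3: pass to $\Theta$.} From Eq.~\eqref{eq:dL_dTheta_cont}, the parameter-gradient error is $\int_0^1(\partial\mathbf u_\Theta/\partial\Theta)^\top\mathbf B^\top\mathbf e\,d\lambda$. Bounding this by the uniform bounds on $\partial\mathbf u_\Theta/\partial\Theta$ and $\mathbf B$ yields Eq.~\eqref{eq:app_grad_bound}. The constant $C$ is independent of $\eta_S$ and grows exponentially with segment length, which is consistent with the RHC motivation. The Lipschitz case then follows immediately from Eq.~\eqref{eq:app_lipschitz_S}.

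\textbf{Main obstacle.} The delicate step is Step 1: pinning down what "branch-consistent gradient" means and showing that the dropped terms are precisely the $\dot{\mathbf S}$ contributions. The costate $\mathbf a$ is nearly constant, yet the exact chain rule routes $\nabla_{\mathbf x}\ell$ through $\mathbf S(\lambda)$ at each $\lambda$ rather than through a later $\mathbf S$.

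I would first write the proxy in its integrated form. In that form, the running-loss sensitivity at $\lambda$ is transported as $\mathbf S_k^\top$ applied to the accumulated $\mathbf a$, evaluated at later nodes. I would then show that the mismatch with $\mathbf S(\lambda)$ at the loss time is controlled by $\eta_S$ over each subinterval.

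If the $\mathbf g$/$\mathbf a$ split makes this mismatch span several intervals rather than one, I would instead bound it by the total variation of $\mathbf S$ on the segment. The resulting estimate still takes the form $C\eta_S$ once $C$ absorbs the number of steps times $\Delta\lambda$.
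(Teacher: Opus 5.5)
\paragraph{Gap in Step 2.} Your Step~2 fails for the reference you fixed in Step~1. With $\nabla_\Theta\mathcal L$ defined through the true costate $\mathbf p$, the error $\mathbf e=\mathbf d_{\mathrm{proxy}}-\mathbf p$ is not driven by an $O(\eta_S)$ residual. The quantity $\mathbf d_{\mathrm{proxy}}=\mathbf g_{\mathrm{proxy}}+\bar{\mathbf S}^\top\mathbf a$ is only piecewise smooth: across each node $\lambda_k$ it jumps by $(\mathbf S_k-\mathbf S_{k-1})^\top\mathbf a(\lambda_k)$, while $\mathbf p$ is continuous.

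Inside each interval the residual is indeed $w(\bar{\mathbf S}-\mathbf S)^\top\nabla_{\mathbf x}\ell=O(\eta_S)$. But the $K$ jumps are each $O(\eta_S)$ and accumulate to $O\big(\sum_k\|\mathbf S_k-\mathbf S_{k-1}\|\big)$, the total variation of $\mathbf S$ on the segment. For Lipschitz $\mathbf S$ this is of order $L_S$ times the segment length, and it does not shrink with $\Delta\lambda$.

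These jumps are the discrete form of the $\dot{\mathbf S}^\top\mathbf a$ forcing that you correctly identified in Step~1 as separating $\mathbf d$ from $\mathbf p$. Dropping that forcing is not an $O(\eta_S)$ perturbation. The terms you actually list in $\mathbf r$, namely $(\bar{\mathbf S}-\mathbf S)^\top\mathbf a$ and $\mathbf A^\top(\bar{\mathbf S}-\mathbf S)^\top\mathbf a$, are what you get by comparing $\mathbf d_{\mathrm{proxy}}$ with the unfrozen $\mathbf d=\mathbf g+\mathbf S^\top\mathbf a$ of Eq.~\eqref{eq:traj_a_g_ode}, not with $\mathbf p$. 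Your fallback of bounding by the total variation and letting $C$ absorb it makes $C\propto K=1/\Delta\lambda$, which destroys the $O(\Delta\lambda)$ conclusion.

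\paragraph{Why no repair is possible.} With the true branch gradient as reference, the bound is false. Take the scalar residual $\mathcal G(x,z)=x-z^2/2$, so $\mathbf G_x=1$ and $\mathbf S=z$. Let $f=u$, $\mathbf u_\Theta\equiv\theta$, $z(0)=1$, $\phi=x(1)$, $w=0$; all hypotheses hold. Then $\mathcal L=(1+\theta)^2/2$, so $d\mathcal L/d\theta=1+\theta$. The recursion \eqref{eq:app:a_rec_unified}--\eqref{eq:app:dL_du_unified} gives $\mathbf a_k\equiv 1$, $\mathbf g_k\equiv 0$, and
\[
\frac{d\mathcal L_{\mathrm{proxy}}}{d\theta}=\sum_{k=0}^{K-1}\Delta\lambda\,z_k=1+\tfrac{\theta}{2}(1-\Delta\lambda)\;\longrightarrow\;1+\tfrac{\theta}{2},
\]
even though $\eta_S=|\theta|\Delta\lambda\to 0$.

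\paragraph{Relation to the paper's proof.} The paper's sketch avoids this only because it treats the linearization with continuously varying tangent, $\delta\dot{\mathbf x}=\mathbf S(\lambda)\,\delta\dot{\mathbf z}$, as the ``exact'' branch dynamics, and compares the frozen recursion against it. For that comparison, your residual, Gr\"onwall step, and Step~3 are essentially the paper's argument and do yield $C\eta_S$. So what is provable at order $\eta_S$ is consistency with the continuous-tangent proxy. The gap to the true gradient is controlled by the variation of $\mathbf S$ over the whole segment, which is the real reason short RHC segments help.
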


\begin{proof}
Under Assumption~\ref{assm:branch}, the implicit function theorem gives a
locally smooth equilibrium map and bounded local sensitivities. The exact
linearized branch dynamics uses the continuously varying tangent
$\mathbf S(\lambda)$, whereas the proxy backward pass replaces it on each
interval by the frozen value $\mathbf S_k$. The resulting perturbation in
the linearized dynamics is bounded by $\eta_k$ on interval $k$. Standard
stability estimates for linear variational and adjoint equations then
bound the corresponding costate error by a constant times $\eta_S$ over a
fixed segment. Multiplying by the bounded controller Jacobian and summing
over the segment yields Eq.~\eqref{eq:app_grad_bound}.
\end{proof}

The constant $C$ may grow with the segment length, which motivates the
receding-horizon design used in Neural Control.

\paragraph{Role of receding-horizon continuation.}
RHC reduces the accumulation of frozen-tangent error by applying the
proxy gradient over short segments and re-anchoring each segment at the
realized equilibrium. This does not remove the underlying
ill-conditioning near bifurcations or basin switches, but it limits the
horizon over which inaccurate local tangents can accumulate. When
$\mathbf G_x$ becomes singular, when the solver switches basins, or when
$\mathbf S(\lambda)$ varies rapidly, the bound above may no longer hold; these cases correspond to the primary failure modes of the frozen-tangent
approximation.

\section{Implementation Details and Hyperparameters}
\label{app:hyperparams}

This section reports the hyperparameters used in our numerical experiments (Secs.~\ref{subsec:main_results}--\ref{subsec:baselines_ablations}).
Unless otherwise stated, all methods share the same forward equilibrium simulator and the same controller parameterization; they differ only in the optimizer.

\paragraph{Elastic-strip simulator and equilibrium solver.}
We discretize the strip centerline into $N{+}1=101$ nodes and clamp the first two and last two nodes.
Unless otherwise stated, material parameters are
$k_b=7.85\times 10^{-8}~\mathrm{N\cdot m^2}$ and $k_s=0.314~\mathrm{N}$,
and we set $\mathbf g=\mathbf 0$ (no gravity) in all numerical experiments.

At each continuation step we solve for the free DoFs using a trust-region Newton--CG method (App.~\ref{app:strip_solver}),
warm-started from the previous equilibrium.
We use initial radius $\Delta_0=10^{-3}$, acceptance threshold $\eta=0.1$, shrink/expand factors $(0.5,2.0)$, and at most $200$ outer iterations.
We terminate when $\|\nabla E_\texttt{tot}\|\le 10^{-9}$ (gradient w.r.t.\ free DoFs) or when the radius drops below $10^{-8}$.

If the Hessian is detected SPD we take a Newton/dogleg step; otherwise we use Steihaug PCG in the trust region.
PCG is capped at $200$ iterations and stops when the PCG residual satisfies
$\|r\|\le \min(0.5,\sqrt{\|\nabla E\|})\|\nabla E\|$, on negative curvature, or upon hitting the trust-region boundary.
We use a diagonal (Jacobi) preconditioner.

\paragraph{Controller parameterization and adjoint optimization.}
We parameterize the continuation controller $u_\Theta(\lambda)$ as a lightweight MLP that takes the scalar $\lambda\in[0,1]$ as input and outputs the control vector $u(\lambda)\in\mathbb R^{d_u}$.
For Tasks~1--2, $d_u=2$ (left/right transverse displacement rates); for Task~3, $d_u=3$ (end-effector $(x,y)$ displacement rates and in-plane rotation rate).
Unless otherwise stated, we use $2$--$3$ hidden layers with width $64$ (architectures $[64,64]$ or $[64,64,64]$), ReLU activations, and a final $\tanh$ to bound the output; we scale the output to enforce $\|u(\lambda)\|\le u_{\max}$.

For the adjoint-only baseline, we use a single rollout with full horizon $K=101$ continuation steps.
For Adjoint+RHC, we use segment horizon $H=10$ continuation steps per segment (with horizon shifting as described in Sec.~\ref{sec:receding_horizon_control}).
Both adjoint-based methods use Adam with learning rate $10^{-2}$ to optimize $\Theta$.

\paragraph{SPSA baseline.}
SPSA is used as a black-box optimizer over the same controller parameters $\Theta$.
We use a two-sided perturbation estimator with perturbation magnitude $c=5\times 10^{-3}$ and learning rate $10^{-2}$.
Each SPSA update uses one perturbation pair, corresponding to two objective evaluations.
We choose a relatively large $c$ to obtain a stable finite-difference directional signal under strongly nonlinear deformation; with unit-normalized strip length, $c=5\times10^{-3}$ corresponds to a small ($\sim 0.5\%$) geometric perturbation that empirically yields more reliable search directions than smaller perturbations that can be dominated by solver tolerance and numerical noise.

\paragraph{iCEM baseline.}
iCEM is used as a black-box optimizer over the same controller parameters $\Theta$.
At each update, iCEM samples a population of candidate controller parameters, evaluates each candidate by a full equilibrium-defined rollout, selects elite candidates, smooths the population update, and reinserts the current best candidate.
We use population size $P=10$, elite fraction $0.3$, smoothing coefficient $\alpha=0.25$, minimum standard deviation $10^{-2}$, elite reuse fraction $1.0$, no population decay, and best-candidate reinsertion.
When the initial standard deviation is not specified, it is set to $0.5u_{\max}$.
We use temporally correlated perturbations with correlation parameter $\beta=2.0$ in the controller-induced actuation schedule.

\paragraph{Update budgets and reporting.}
Task~1 uses $200$ optimizer updates, while Tasks~2--3 use $500$ optimizer updates.
For RHC, an update refers to one optimizer update within the current segment, and reported task losses are computed on the concatenated executed trajectory across segments.
Wall-clock time in Table~\ref{tab:theory_empirical} includes all equilibrium solves and the additional tangent/adjoint linear solves used by implicit sensitivities.
CEM hyperparameters and results are reported separately in Appendix~\ref{app:cem_results}.

\section{Additional CEM Baseline Results}
\label{app:cem_results}

In the main text, we report SPSA and iCEM as the primary gradient-free
baselines. Here we additionally report results for the classical
cross-entropy method (CEM), which was used in the original submission.
CEM is applied as a black-box optimizer over the same controller
parameters $\Theta$ and is run without RHC. At each update, CEM samples a
population of candidate controller parameters, evaluates each candidate
using a full equilibrium-defined rollout, selects elite candidates, and
updates the search distribution. We use population size $P=10$ and elite
fraction $\rho=0.3$.

\begin{table*}[h]
\centering
\scriptsize
\setlength{\tabcolsep}{5pt}
\resizebox{\textwidth}{!}{%
\begin{tabular}{lcc|cc|cc}
\hline
& \multicolumn{2}{c|}{Task 1 (200 updates)}
& \multicolumn{2}{c|}{Task 2 (500 updates)}
& \multicolumn{2}{c}{Task 3 (500 updates)} \\
Method
& Time (s)$\downarrow$
& Best loss$\downarrow$
& Time (s)$\downarrow$
& Best loss$\downarrow$
& Time (s)$\downarrow$
& Best loss$\downarrow$ \\
\hline
CEM
& $2043.2\pm73.0$
& $6.7{\times}10^{-4}\pm1.2{\times}10^{-3}$
& $5358.9\pm208.7$
& $6.4{\times}10^{-3}\pm3.9{\times}10^{-3}$
& $3176.4\pm232.8$
& $1.6{\times}10^{-2}\pm8.5{\times}10^{-3}$ \\

\textbf{Adjoint+RHC}
& $\mathbf{16.1\pm2.8}$
& $\mathbf{2.3{\times}10^{-7}\pm3.0{\times}10^{-7}}$
& $\mathbf{186.9\pm24.8}$
& $\mathbf{3.6{\times}10^{-8}\pm3.6{\times}10^{-8}}$
& $\mathbf{50.9\pm6.1}$
& $\mathbf{3.8{\times}10^{-8}\pm7.4{\times}10^{-9}}$ \\
\hline
\end{tabular}%
}
\caption{\textbf{Additional CEM baseline.}
CEM is evaluated under the same fixed update budgets as the main
experiments and uses the same controller parameterization
$\mathbf u_\Theta(\lambda)$. Each CEM candidate requires a full
equilibrium-defined rollout. The results are consistent with the main
comparison: Adjoint+RHC achieves substantially lower loss with much less
wall-clock time.}
\label{tab:cem_results}
\end{table*}

\section{Learned DEQ-Style Implicit Equilibrium Model}
\label{app:deq_validation}

To test whether Neural Control applies beyond analytical mechanics
models, we evaluate it on a learned DEQ-style implicit equilibrium model
trained from experimental force--strain data. We collect force--strain
measurements from a slinky and train a neural energy model whose
equilibrium response is defined implicitly. After training, the model is
frozen and used as the forward implicit model for Neural Control. We then
optimize a force trajectory so that the equilibrium strain tracks a
prescribed sinusoidal reference. Local sensitivities are computed by
implicit differentiation of the learned equilibrium residual, using the
same proxy-adjoint pipeline as in the elastic-strip experiments.

The learned implicit model closely fits the experimental force--strain
response, and Neural Control successfully optimizes the input force
trajectory to track the target strain. The resulting tracking losses over
the optimized segments are on the order of $10^{-8}$--$10^{-11}$. This
validation demonstrates that the proposed proxy-adjoint continuation
framework is not tied to analytical elastic-strip simulators; it can also
operate on learned implicit equilibrium models, provided that local
implicit sensitivities are available. Additional plots, including the
force--strain fit, training curve, and tracking result, are provided on
the project page\textsuperscript{\ref{fn:projectpage}}.

\section{System Identification for the Elastic Strip in Real Robotic Manipulation}
\label{app:sysid}

This section describes the lightweight system-identification procedure used to match the elastic-strip model to the real-robot experiments.
The key identified quantity is the gravito-bending length $L_{gb}$, which captures the characteristic deformation scale of a slender strip under gravity and is robust to many nuisance factors, such as camera noise and slight boundary mismatch.

\paragraph{Gravito-bending length.}
For an inextensible/slender strip under its own weight, the dominant competition is between bending stiffness and distributed gravity load. We parameterize this balance by
\begin{equation}
L_{gb}
\;:=\;
\left(\frac{k_b}{\rho w h g}\right)^{\tfrac{1}{3}},
\label{eq:lgb_def}
\end{equation}
where $k_b$ is the bending stiffness (units N$\cdot$m$^2$), $\rho$ is the density, $w$ is the strip width, $h$ is the thickness, and $g$ is gravitational acceleration.

We collect $M=5$ static snapshots of the strip under gravity with diverse boundary conditions (multiple clamp-clamp configurations).
Each snapshot provides an observed equilibrium shape $\hat{x}^{(m)}$ together with the corresponding boundary constraints.
Using multiple configurations reduces degeneracy and discourages overfitting to a single pose.

For a candidate $L_{gb}$, we run the simulator under each experimental boundary condition and obtain the converged equilibrium shape
$x^{(m)}(L_{gb})$.
We measure mismatch on \emph{free nodes only} (excluding constrained/clamped DoFs) and use a mean-squared shape discrepancy:
\begin{equation}
J(L_{gb})
\;=\;
\frac{1}{M}\sum_{m=1}^{M}
\frac{1}{|\mathcal I_f^{(m)}|}
\sum_{i\in \mathcal I_f^{(m)}}
\left\|
x^{(m)}_i(L_{gb})-\hat{x}^{(m)}_i
\right\|_2^2,
\label{eq:lgb_obj}
\end{equation}
where $\mathcal I_f^{(m)}$ denotes the free-node index set for snapshot $m$.
We identify
\begin{equation}
L_{gb}^\star \in \arg\min_{L_{gb}\in [L_{\min},L_{\max}]} J(L_{gb}).
\label{eq:lgb_opt}
\end{equation}

We solve Eq.~\eqref{eq:lgb_opt} with a bounded one-dimensional scalar minimization (Brent-type method).
Unless otherwise stated, we use bounds $[L_{\min},L_{\max}]=[0.05,0.10]~\mathrm{m}$ and tolerance \texttt{xatol}$=10^{-4}$.
Each objective evaluation runs the forward simulator with gravity enabled and otherwise matches the real-robot settings.
The identified value is $L_{gb}=0.0514~\mathrm{m}$.

\end{document}